\justify\parbox[t]}
\titlespacing{\section}{2pt}{2pt}{2pt}
\def\BibTeX{{\rm B\kern-.05em{\sc i\kern-.025em b}\kern-.08em
    T\kern-.1667em\lower.7ex\hbox{E}\kern-.125emX}}
\newtheorem{lemma}{Lemma}[section]
\newtheorem{remark}{Remark}[section]
\newtheorem{assumption}{Assumption}[section]
\begin{document}

\title{Latency-aware Multimodal Federated Learning over UAV Networks
}

\author{ 
{Shaba Shaon and Dinh C. Nguyen}
\IEEEcompsocitemizethanks{
\IEEEcompsocthanksitem *Part of this work has been accepted at the IEEE Conference on Standards for Communications and
Networking (CSCN), Serbia, 2024 \cite{shaon2024wireless}. Shaba Shaon and Dinh C. Nguyen are with ECE Department,  University of Alabama in Huntsville, Huntsville, AL 35899, USA. emails: ss0670@uah.edu, dinh.nguyen@uah.edu.
}}

\maketitle

\begin{abstract}
This paper investigates federated multimodal learning (FML) assisted by unmanned aerial vehicles (UAVs) with a focus on minimizing system latency and providing convergence analysis. In this framework, UAVs are distributed throughout the network to collect data, participate in model training, and collaborate with a base station (BS) to build a global model. By utilizing multimodal sensing, the UAVs overcome the limitations of unimodal systems, enhancing model accuracy, generalization, and offering a more comprehensive understanding of the environment. The primary objective is to optimize FML system latency in UAV networks by jointly addressing UAV sensing scheduling, power control, trajectory planning, resource allocation, and BS resource management. To address the computational complexity of our latency minimization problem, we propose an efficient iterative optimization algorithm combining block coordinate descent and successive convex approximation techniques, which provides high-quality approximate solutions. We also present a theoretical convergence analysis for the UAV-assisted FML framework under a non-convex loss function. Numerical experiments demonstrate that our FML framework outperforms existing approaches in terms of system latency and model training performance under different data settings.
\end{abstract}
\vspace{-3pt}
\begin{IEEEkeywords}
Federated learning, wireless, latency
\end{IEEEkeywords}

\section{Introduction}
Unmanned aerial vehicles (UAVs), commonly known as drones, have been revolutionizing next-generation wireless networks with their versatile capabilities including line-of-sight (LoS) connections, 3D mobility, and flexibility. UAVs can act as flying base stations (BSs) for delivering communication, computation, and caching services to overcome traditional infrastructure limitations, while can also serve as flying users for tasks like remote sensing, delivery services, target tracking, and virtual reality support. Recently, UAVs have been integrated with machine learning (ML) for intelligent services, such as classifying aerial images from UAV cameras. To ensure data privacy during ML model training in UAV networks, federated learning (FL) has been recently employed, allowing UAVs to train a model and share only the model updates with a cloud server, without exchanging data. \cite{karmakar2023novel}. \textcolor{black}{More specifically, in FL over UAV networks, each UAV trains a local model on its own dataset, which may consist of images or sensor data collected during flights. The UAVs then send only the updated local model parameters to a central server, which aggregates these updates to refine the global model. This decentralized approach ensures that sensitive data remains on the UAV, enhancing privacy and security \cite{lim2021towards, zhang2020federated, liu2023federated}.}

Given the diversity of data sources (e.g., visual, auditory and textual data) in real-life intelligent UAV applications, federated multimodal learning (FML) \cite{chen2022towards} has recently introduced as a promising solution to collaborate different UAVs across different modality clusters. This approach leverages UAVs' diverse sensing capabilities to provide complementary data for improved model accuracy and generalization. By collaboratively processing different data types, UAVs can better understand and respond to complex scenarios, addressing single-modal system limitations. \textcolor{black}{In FML over UAV networks, each UAV processes and trains on data of specific modality, then contributes model updates tailored to that modality's characteristics to the central server. This approach allows for a richer aggregation process that leverages the strengths of each data type, distinguishing FML-UAV from FL-UAV by providing a more comprehensive representation of complex environments \cite{dong2023federated, gong2024multi}.}
\subsection{Related Works}
Several studies have considered FL-UAV and FML networks. We now summarize related works in these areas and compare methodology design features between our paper and related works. 

\textit{1) FL-UAV:} Most previous research in this area has primarily concentrated on communication and/or computation aspects \cite{6,9}. In \cite{5}, the authors proposed a FL-aided image classification approach for UAV-aided exploration scenarios, enhancing classification accuracy while reducing communication costs and computational complexity. The work in \cite{11} contributed a UAV-empowered wireless power transfer solution for sustainable FL-based wireless networks, optimizing power efficiency through a joint optimization algorithm that reduces UAV transmit power. The authors in \cite{zeng2020federated} proposed a distributed FL framework for UAV swarms that optimizes convergence by jointly allocating power and scheduling, reducing communication rounds while considering wireless factors and energy consumption. In \cite{pham2022energy}, an energy-efficient framework for Federated Learning (FL) is introduced, utilizing UAV-assisted wireless power transmission to optimize resource distribution. This approach aims to reduce overall energy usage while improving the sustainability of FL networks. In \cite{fu2023federated}, a UAV-assisted FL system is proposed to minimize training time through optimization of device scheduling, UAV path, and energy constraints. Similarly, \cite{nguyen2022fedfog} presents an FL algorithm dedicated to wireless fog-cloud systems, where the authors concentrate on training time and global loss. Data sensing has become a key focus in FL research, garnering increasing attention recently. In \cite{3}, a combined resource distribution approach for federated edge learning was introduced, optimizing human motion recognition by effectively managing sensing, computation, and communication resources. The work in \cite{12} introduced a multi-task deep learning framework for optimizing sensing, communication, and computation resources using multi-objective optimization. In \cite{10}, an optimization scheme for data sensing over UAV networks was presented, improving energy utilization by tackling several network parameters together. In \cite{din2025federated}, a unified FL framework was developed for UAV-enabled Internet of Things networks, showing improved accuracy, resilience under attacks, and scalability in large deployments. The authors in \cite{li2025exploring} proposed a hierarchical FL framework to improve robustness in UAV-based object detection missions, leveraging three-dimensional graph-based clustering, intragroup backups, and adaptive server selection. \textit{However, these works do not address latency minimization in such networks, a crucial factor for real-time applications where timely data acquisition and processing are essential. Optimally coordinating data sensing, computation, and communication steps is vital to enhance response times in time-sensitive scenarios.}

\textit{2) FML:} FML in wireless networks has become an important research topic due to its ability to integrate data from multiple sources. In \cite{zhao2022multimodal}, a multimodal, semi-supervised FL framework was proposed to enhance classification by training local autoencoders on different data types and using auxiliary labeled data for aggregation. The study in \cite{yin2024aggregation} introduced a parameter scheduling approach for wireless personalized FML, improving both personalization and communication efficiency through learning-based aggregation and modality-specific scheduling. In \cite{tun2025resource}, a resource-efficient layer-wise and progressive training strategy was proposed to reduce memory, computation, and communication costs in FML systems. The work in \cite{gao2025multimodal} introduced a multi-view domain fusion framework with global logit alignment and local angular margin to address modality-induced data heterogeneity in FL. \textit{However, no FML frameworks have yet been developed specifically for UAV networks.} 

In spite of these advancements, \textit{the problem of latency minimization in UAV-enabled FML systems remains underexplored}. Considering the limited computational capacity and battery life of UAVs, optimizing the round-trip ML model training latency in relation to UAV resources, such as transmit power and computational frequency, is essential for achieving efficient and timely FML. 


\begin{figure}[t!]
    \centering
    \includegraphics[width=0.99\linewidth]{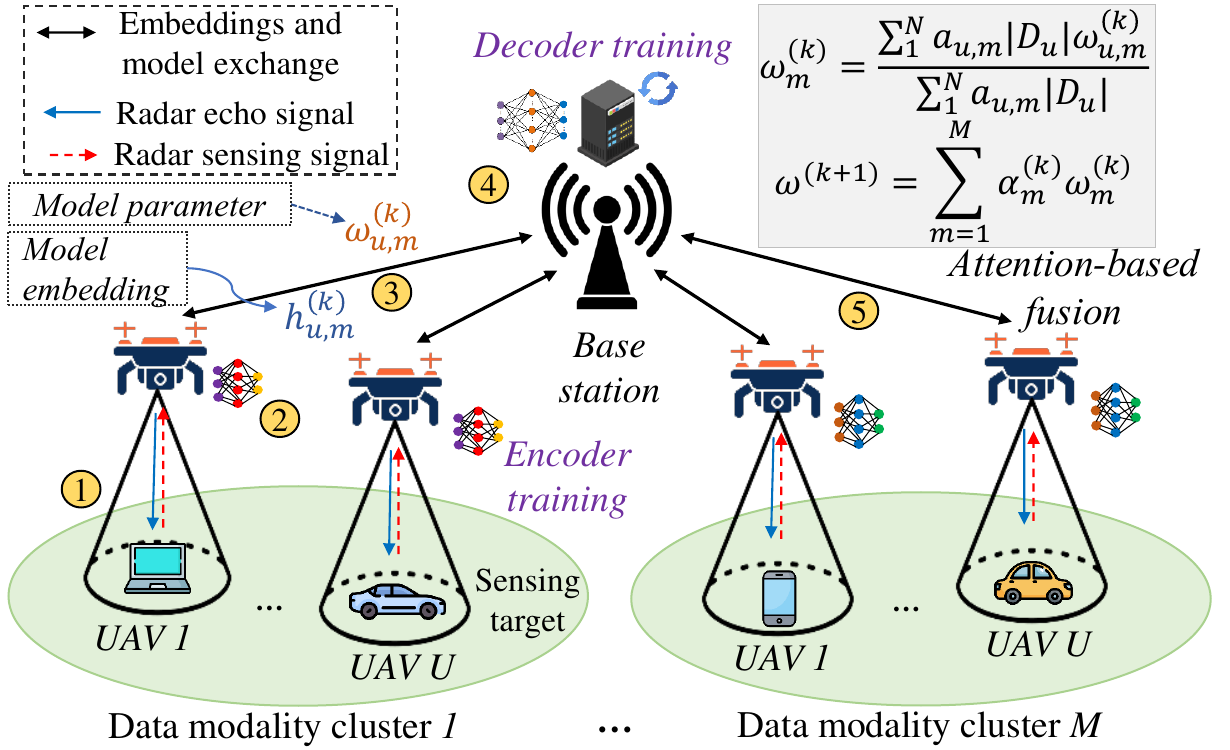} 
    \captionsetup{font=footnotesize}
    \caption{\footnotesize \textcolor{black}{Proposed FML framework over UAV networks with five key steps: (1) UAVs sense data from the ground object, (2) train local models on the sensed data, (3) upload embeddings and models to the Base Station (BS), (4) the BS trains a decoder model using concatenated embeddings, and (5) the BS aggregates local models to create a unified global model, sending it back to the UAVs.}}
    \label{Fig:Overview}
    \vspace{-15pt}
\end{figure}

\vspace{-5pt}

\subsection{Motivations and Key Contributions}
Inspired by the limitations in existing literature, our paper makes the following contributions:
\begin{itemize}
    \item We propose a novel UAV-assisted FML framework in which distributed UAVs work together to train a shared ML model, with a BS serving as the central server. To improve model accuracy and enhance generalization, we incorporate multi-modal data sensing by UAVs, tackling the limitations of single-modal data and offering a more comprehensive understanding of the environment. Additionally, we provide a comprehensive convergence analysis of our proposed UAV-enabled FML framework under a non-convex loss function scenario.
    \item We define a new latency minimization problem for the FML frmawork that considers several crucial parameters, such as: UAV's sensing scheduling, power control, trajectory, resource allocation, and BS resource allocation. As the problem is computationally intractable for traditional convex solvers in its current form, we introduce an iterative optimization approach that combines block coordinate descent (BCD) and successive convex approximation (SCA) techniques to find optimal solutions.
    \item We perform extensive simulations to assess the performance of our UAV-enabled FML framework and the joint optimization scheme. The results demonstrate that our proposed FML framework outperforms baseline methods in model loss and accuracy convergence, in both idependent and identically distributed (IID) and non-IID data settings. Additionally, our approach reduces system latency by up to $42.49\%$, compared to benchmark schemes.
\end{itemize}


\section{System Model} 
\subsection{FML Model Formulation over UAV Networks}
We consider a FML framework over UAV networks as illustrated in Fig.~\ref{Fig:Overview}. In our system model, a base station (BS) orchestrates the FML process where distributed UAVs from various modality clusters work together to train a shared ML model. Our system incorporates $M$ data modalities and the set of modalities is denoted as $\mathcal{M} = \{1,2,\dots,M\}$. Here, $m$ refers to a specific modality while we mention that $M < U$. In each modality cluster $m$, the set of UAVs is represented by $\mathcal{U} = \{1,2,\dots,U\}$. Each UAV $u$ is equipped with a single-antenna transceiver that can alternate between sensing and communication modes as required. This mode switching is carried out in a time-division manner using a shared radio-frequency interface. \cite{zhang2022accelerating}. UAV $u$ is assumed to collect (by sensing) data $\mathcal{D}_{u}$ of size $D_{u} \triangleq |\mathcal{D}_{u}|$. The union of datasets gathered all UAVs, referred to as the global dataset, is denoted as $\mathcal{D} = \cup_{u \in \mathcal{U}} \mathcal{D}_{u}$, with size $D \triangleq |\mathcal{D}|$. The set of global communication rounds in FML is represented as $\mathcal{K} = \{1,2,\dots,K\}$. 
The procedure of FML model training during each global round $k \in \mathcal{K}$ is summarized as follows:
\begin{enumerate}
    \item Each UAV involved in the process senses data from the selected ground object within its coverage region. 
    \item Then, each UAV trains its local model using the gathered data, extracting embeddings and model parameters upon completion of the training.
    \item Subsequently, the UAV sends its local embeddings and model parameters to the server for aggregation.
    \item Upon receiving the embeddings and model parameters from all participating UAVs, the BS aggregates them for each modality group. The BS then merges (concatenation) the aggregated embeddings from all modality groups, creating a unified embedding that is passed to the decoder for tasks like classification.
    \item Finally, the BS sends the aggregated model parameters for each modality group back to the respective UAVs to begin the next round of training.  
\end{enumerate}


We define a 3D Cartesian coordinate system where the ground targets and the base station (BS) remain stationary. The BS is located at the origin $(0,0,0)$. UAV $u$ begins from a point near the targets, hovers above them to sense and collect data from the chosen target, and trains its local model. Afterward, the UAV flies towards the BS to transmit the local embeddings and model parameters during its flight time $T_{\text{flight}}$. The UAV maintains a constant altitude of $H > 0$ above the ground. This communication phase is divided into $T$ equal time slots, with each slot having a duration of $\delta_{t} = \frac{T_{\text{flight}}}{T}$. To ensure the UAV's position remains nearly constant within each slot, the slot duration is selected to be small enough. As a result, the UAV's horizontal position over time is denoted by $q_{u}[t] \triangleq (x_{u}^{(k)}[t],y_{u}^{(k)}[t])$, where $t \in \mathcal{T} \triangleq \{1,2,\dots,T\}$. We assume UAV $u$ starts its journey at position $q_{I} = [x_{I},y_{I}]$ and reaches the final position $q_{F} = [x_{F},y_{F}]$ during its total flight time $T_{\text{flight}}$. 

In our framework, the maximum UAV velocity is denoted by $V_{\text{max}}$. Therefore, the UAV trajectory has to abide by the constraint $(x_{u}^{(k)}[t+1] - x_{u}^{(k)}[t])^2 + (y_{u}^{(k)}[t+1] - y_{u}^{(k)}[t])^2 \leq (V_{\max} \delta_{t})^2$. This restricts the maximum movement of the UAV in consecutive time slots.
We assume that the single-antenna sensing targets cannot be directly served by the BS because of the blockage of surrounding obstacles. Additionally, it is assumed that all communication links between the UAVs and the BS, as well as those between the UAVs and their sensing targets, are line-of-sight (LoS) channels. Therefore, the LoS channel gain between UAV $u$ and BS at time slot $t$ abides by free space pathloss model, expresses as $g_{u,\text{BS}}^{(k)}[t] = \frac{\beta_{0}}{d_{u,\text{BS}}^{(k)}[t]^2}$. Here, $\beta_{0}$ is the channel gain at reference distance $d_{0}=1 m$, and $d_{u,\text{BS}}^{(k)}[t]$ denotes distance between UAV $u$ and the BS at time slot $t$.

In this study, we employ a FML framework with an encoder-decoder architecture, as illustrated in Fig.~\ref{Fig:Overview-Structure}. It is important to note that encoders and decoders are ML models designed for specific functions. Each UAV in every modality cluster is equipped with an encoder (a feature extractor), while the BS operates a decoder (a classifier) to generate the final training results, such as classification outcomes \cite{zhao2022multimodal}. 
Specifically, the encoders extract features from the single-modal data sets owned by each UAV, while the decoders perform classification tasks at the server. For each UAV $u$, each data point $d \triangleq (X,y) \in \mathcal{D}_{u}$ consists of a feature vector $X$ and a label $y$, where $X = \{x_{m}\}$ represents the set of features corresponding to modality $m$ that UAV $u$ holds. For UAV $u$ with data of modality $m$, this data is passed through the corresponding single modal encoder $\boldsymbol{w}_{u,m}(.)$ to generate feature embeddings, denoted as $h_{u,m}^{(k)} = \boldsymbol{w}_{u,m}^{(k)}(x_{u,m}^{(k)})$.
The BS collects embeddings of each modality $m$ from $u$ UAVs and performs modality-based separate aggregation as 
\begin{equation}
    h_{m}^{(k)} = \frac{1}{U} \sum_{u=1}^{U} {h_{u,m}^{(k)}}.
\end{equation}
After aggregating the embeddings from all available modalities in the system, the BS concatenates them to form a unified embedding, given by
\begin{equation}
    h^{(k)} = {h_{1}^{(k)} \oplus h_{2}^{(k)} \oplus h_{3}^{(k)} \dots h_{M}^{(k)}}.
\end{equation}
This concatenated multimodal feature embedding is then input into the BS decoder $\boldsymbol{w}_{\text{BS}}(.)$ to generate a prediction $\hat{y}$, expressed as $\hat{y} = \boldsymbol{w}_{\text{BS}}(h)$.
Each UAV $u$ participates in $J$ iterations of SGD and $K$ global communication rounds. Specifically, in global round $k$, UAV $u$ performs $J$ SGD iterations before sending its updated model to the server. The final model after local training at UAV $u$ is denoted as $\boldsymbol{w}_{u,m}^{(k),J}$. Once the local model training is completed, the BS aggregates the received model parameters for each modality individually as
\begin{equation}
    \boldsymbol{w}_{m}^{(k)} = \frac{\sum_{u \in \mathcal{U}} a_{u,m} |\mathcal{D}_{u}| \boldsymbol{w}_{u,m}^{(k),J}}{\sum_{u \in \mathcal{U}} a_{u,m} |\mathcal{D}_{u}|}, \forall{m \in \mathcal{M}},
\end{equation}
where, $a_{u,m}, \forall{u \in \mathcal{U}, m \in \mathcal{M}}$ is a binary indicator which equals 1 if UAV $u$ has access to modality $m$, and 0 otherwise. 

\textcolor{black}{It is to note that these parameters encapsulate the learned patterns from all participating UAVs for that particular modality. After conducting model parameters aggregation, a set of high-level features is extracted from these aggregated parameters. Let us denote the extracted high-level features from aggregated model parameters $\boldsymbol{w}_{m}^{(k)}$ of modality $m$ as $z_{m}^{(k)}$. To extract these features, a subset of the data is used and is passed through the model configured with the aggregated parameters $\boldsymbol{w}_{m}^{(k)}$, i.e. the encoder model for modality $m$. The output $z_{m}^{(k)}$ from this operation will be a set of high-level features, effectively capturing the essential patterns and characteristics of the data relevant to that modality. Once the high-level features for each modality are extracted, the attention scores are computed. These scores determine the importance of each modality’s contribution to the global model. We denote the attention scoring function as $f$ which takes the high-level features as input and outputs a raw score. To turn the raw scores into a usable format that reflects probabilities (i.e., how much each modality should contribute), a softmax function is applied as
\begin{equation}
    \alpha_{m}^{(k)} = \text{softmax}(f(z_{m}^{(k)})),
\end{equation}
where $\alpha_{m}^{(k)}$ is the attention score for modality $m$ during global round $k$. The softmax function ensures that all the attention scores sum up to 1, making them effectively a distribution over modalities. Using the attention scores, BS performs a weighted averaging of the aggregated model parameters from all the modalities as
\begin{equation}
    \boldsymbol{w}_{g}^{(k+1)} = \frac{\sum_{m=1}^{M} \alpha_{m}^{(k)} \boldsymbol{w}_{m}^{(k)}}{\sum_{m=1}^{M} \alpha_{m}^{(k)}}.
\end{equation}
The BS transmits the aggregated model parameters for each modality group to the corresponding participating UAVs for use in the next training round.
\begin{equation}
    \boldsymbol{w}_{u,m}^{(k+1),0} \leftarrow \boldsymbol{w}_{m}^{(k)}, \forall{u \in \mathcal{U},m \in \mathcal{M}}
\end{equation}} 
\begin{figure}[t]
    \centering
    \includegraphics[width=0.99\linewidth]{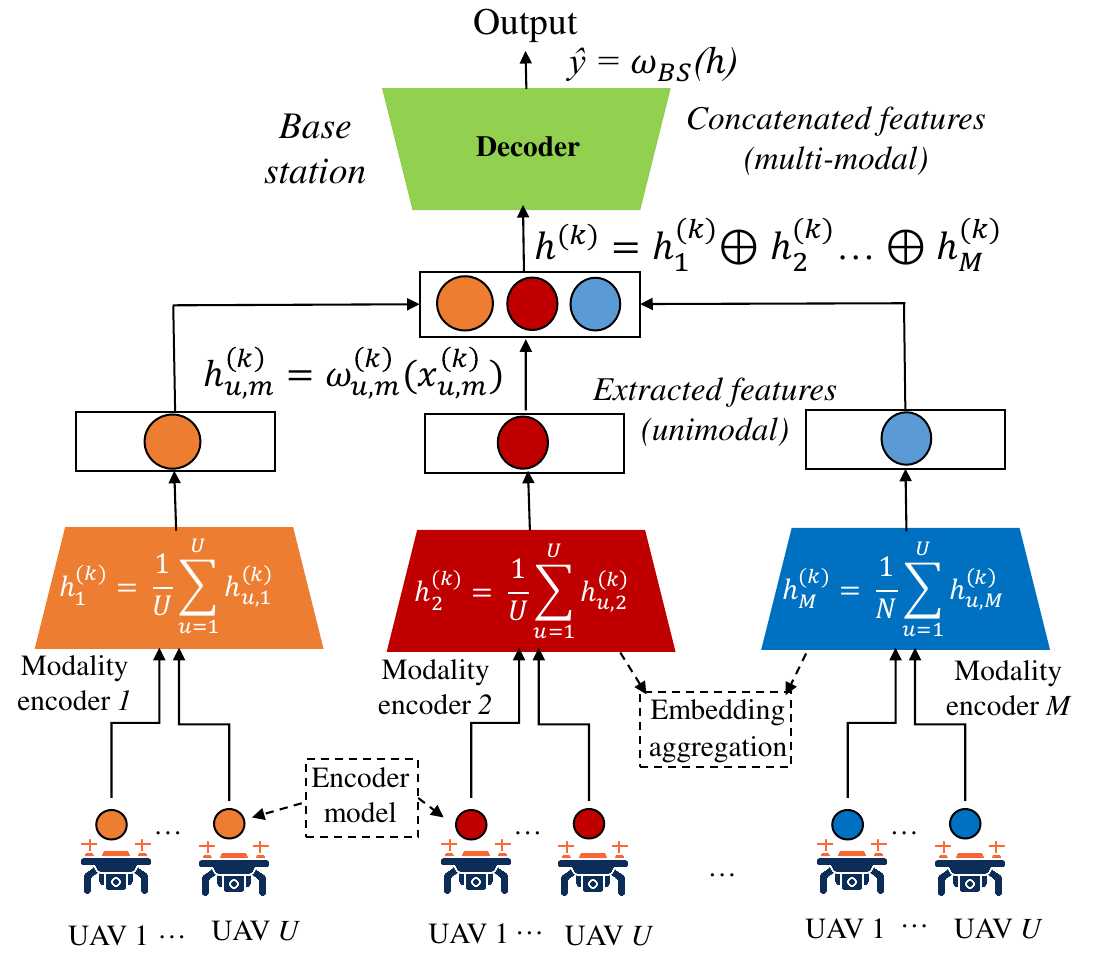} 
    \caption{The encoder-decoder architecture in the proposed FML framework. }
    \label{Fig:Overview-Structure}
    \vspace{-15pt}
\end{figure}

We calculate the local loss function of UAV $u$ holding data of modality $m$ as 
\begin{equation}
    \mathcal{L}_{u,m}(\boldsymbol{w}_{u,m}^{(k)};\mathcal{D}_{u}) = \frac{1}{|\mathcal{D}_{u}|} \sum_{d \in \mathcal{D}_{u}} \ell(\boldsymbol{w}_{u,m}^{(k)};d),
\end{equation}
where $\ell(\boldsymbol{w}_{u,m}^{(k)};d)$ represents the loss function associated with ML model calculated on data point $d$. As a result, the global loss function is formulated as
\begin{equation}
    \mathcal{L}_{m}(\boldsymbol{w}) \triangleq \sum_{u \in \mathcal{U}} \frac{|\mathcal{D}_{u}|}{|\mathcal{D}|} \mathcal{L}_{u,m}(\boldsymbol{w}_{u,m}^{(k)};\mathcal{D}_{u}).
\end{equation}
The local ML model training at each UAV is performed through multiple minibatch SGD iterations. For a local model at UAV $u$ of modality $m$, during the $j^{\text{th}}$ SGD iteration in the $k^{\text{th}}$ global round (i.e., $\boldsymbol{w}_{u,m}^{(k), j}$), the subsequent local model is updated as 
\begin{equation}
    \boldsymbol{w}_{u,m}^{(k),\text{j+1}} = \boldsymbol{w}_{u,m}^{(k),J} - \eta^{k,j} \Tilde{\nabla} \mathcal{L}_{u,m}(\boldsymbol{w}_{u,m}^{(k),J};\beta_{u}^{k,j}),
\end{equation}
where 
\begin{equation}
    \Tilde{\nabla} \mathcal{L}_{n,m}(\boldsymbol{w}_{u,m}^{(k),J};\beta_{u}^{k,j}) \triangleq \frac{1}{\beta_{u}^{k,j}} \sum_{d \in \beta_{u}^{k,j}} \nabla \ell(\boldsymbol{w}_{u,m}^{(k),J};d).
\end{equation}
Here, $\beta_{u}^{k,j}$ represents a mini-batch of data randomly sampled from $\mathcal{D}_{u}$, and $\eta^{k,j}$ denotes the learning rate for SGD.
For clarity, the notation table is provided below to summarize the key symbols used throughout the paper.
\begin{table}[t]
\centering
\caption{Summary of Notations}
\begin{tabular}{ll}
\hline
\textbf{Symbol} & \textbf{Description} \\
\hline
$M$ & Number of data modalities in the network \\
$U$ & Number of UAVs in a modality cluster \\
$K$ & Number of global communication rounds \\
$J$ & Number of local iterations \\
$T_{\text{flight}}$ & UAV's flight time \\
$T$ & Time slots of the UAV communication phase \\
$\delta_{t}$ & Duration of a time slot \\
$V_{\text{max}}$ & Maximum UAV velocity \\
$g_{u,\text{BS}}^{(k)}[t]$ & Channel gain for UAV-BS link \\
$d_{u,\text{BS}}^{(k)}[t]$ & Distance between UAV and the BS \\
$\beta_{0}$ & Channel gain at reference distance \\
$\boldsymbol{w}_{u}^{k,j}$ & Local model parameters of UAV \\
$g_{u}^{k,j}$ & Full gradient of UAV \\
$\Tilde{g}_{u}^{k,j}$ & Stochastic gradient of UAV \\
$x_{c,u}^{(k)}$ & Sensing scheduling of UAV \\
$D_{u}^{(k)}$ & Number of data samples sensed by UAV \\
$\mathbf{T}_{\text{se},u}^{(k)}$ & Data sensing time of UAV \\
$E_{\text{se},u}^{(k)}$ & Data sensing energy of UAV \\
$p_{\text{se}, u}^{(k)}$ & Sensing transmit power of UAV \\
$\mathbf{T}_{\text{em-cm},u}^{(k)}$ & Local embeddings uploading time of UAV \\
$\mathbf{T}_{\text{ml-cm},u}^{(k)}$ & Local model parameters uploading time of UAV \\
$E_{\text{em-cm},u}^{(k)}$ & Local embeddings uploading energy of UAV \\
$E_{\text{ml-cm},u}^{(k)}$ & Local model parameters uploading energy of UAV \\
$J'$ & Number of iterations for server-side training \\
$C_{\text{BS}}^{(k)}$ & CPU cycles per sample during server training \\
$f_{\text{BS}}^{(k)}$ & CPU processing rate of the BS \\
$R_{\text{BS}}^{(k)}$ & BS-UAV Downlink rate \\
$\mathbf{T}_{\text{dl},u}^{(k)}$ & Global model downloading time of UAV \\
$p_{\text{cm},\text{BS}}^{(k)}$ & Communication transmit power of BS \\
$p_{\text{cm},u}^{(k)}[t]$ & Communication transmit power of UAV \\
$f_{u}^{(k)}$ & CPU computation capability of UAV \\
\hline
\end{tabular}
\end{table}

\subsection{\textcolor{black}{Convergence Analysis for Proposed Multimodal FL Framework}}
In the FML framework, the federated model training is performed across modality clusters. UAVs within a modality cluster exchange ML models with the BS, where the global model aggregation is executed for each cluster. However, to facilitate our convergence analysis, we consider an attention-based fusion for federated averaging across all the modalities where aggregated model parameters from all the modality clusters are fused into one unified global model. This section is dedicated to the convergence analysis of the proposed FML algorithm in the scenario where all UAVs participate. \textit{Our findings reveal that the convergence rate is dependent on the total number of iterations, the number of total UAVs in each modality cluster, and the number of data modalities present in the system.}

\subsubsection{Notation and Definition}
For the convergence analysis, we concentrate on the following optimization problem:
\begin{align}
    \min_{\boldsymbol{w}_{g}} f(\boldsymbol{w}_{g}) & \triangleq \sum_{m=1}^{M} \alpha_{m} \sum_{u=1}^{U} f_{n,m}(\boldsymbol{w}_{u,m}), \label{eqn23}
\end{align}
where $f(\boldsymbol{w}_{g})$ is the global objective function. First, we find the convergence upper bound for a modality cluster $m$. Then we expand our analysis to find the convergence upper bound for the attention-based fused global model. For simplicity, we temporarily omit the notation $m$ in our discussion, i.e., $\boldsymbol{w}_{u,m}$ is now written as $\boldsymbol{w}_{u}$. Then we bring the notation back into our analysis later. In this multimodal federated framework, within a cluster of modality $m$, it is assumed that each UAV $u$ trains its local model on dataset $\mathcal{S}_{u}$ containing $S_{u}$ data points sampled from the local distribution $\mathcal{D}_{u}$. Since the local datasets are generated from different distributions, we carefully consider the heterogeneity of these distributions while analyzing the convergence of FML. We define $g_{u} = \frac{1}{|\mathcal{S}_{u}|} \nabla f_{u}(\boldsymbol{w})\stackrel{\triangle}{=} \frac{1}{|\mathcal{S}_{u}|} \nabla f(\boldsymbol{w};\mathcal{S}_{u})$, where $f(\boldsymbol{w};\mathcal{S}_{u})$ represents the full gradient. Moreover, we denote the stochastic gradient as $\Tilde{g_{u}} \stackrel{\triangle}{=} \frac{1}{B} \nabla f(\boldsymbol{w};\xi_{u})$, where $\xi_{u} \subseteq \mathcal{S}_{u}$ is a uniformly sampled mini-batch with $|\xi_{u}| = B$. The corresponding quantities evaluated at device $u$'s local solution $\boldsymbol{w}_{u}^{k,j}$ during local iteration $j$ of the $k^{\text{th}}$ global round are denoted by $g_{u}^{k,j}$ for the full gradient and $\Tilde{g}_{u}^{k,j}$ for the stochastic gradient. We also define the following notations:
\begin{align}
    \boldsymbol{w}^{k,j} = [\boldsymbol{w}_{1}^{k,j}, \boldsymbol{w}_{2}^{k,j}, \dots, \boldsymbol{w}_{U}^{k,j}],
\end{align}
\begin{align}
    \xi^{k,j} = [\xi_{1}^{k,j}, \xi_{2}^{k,j}, \dots, \xi_{U}^{k,j}],
\end{align}
in order to represent the set of local solutions and sampled mini-batches associated with the devices during local iteration $j$ at $k^{\text{th}}$ global round, respectively. The following notations will be useful for the convergence analysis of the FML framework:
$
    \bar{\boldsymbol{w}}^{k,j} \stackrel{\triangle}{=} \frac{1}{U} \sum_{u \in \mathcal{U}} \boldsymbol{w}_{u}^{k,j},
$
$
    \Tilde{g}^{k,j} \stackrel{\triangle}{=} \frac{1}{U} \sum_{u \in \mathcal{U}} \Tilde{g}_{u}^{k,j},
$
$
    g^{k,j} \stackrel{\triangle}{=} \frac{1}{U} \sum_{u \in \mathcal{U}} g_{u}^{k,j}.
$
Thus, the local SGD update at device $u$ is followed as
$
    \boldsymbol{w}_{n}^{k,j+1} = \boldsymbol{w}_{u}^{k,j} - \eta_{k} \Tilde{g}_{u}^{k,j}
$
It is apparent that 
\begin{align}
    \bar{\boldsymbol{w}}^{k,j+1} = \bar{\boldsymbol{w}}^{k,j} - \eta_{k} \Tilde{g}^{k,j}. \label{eqn30}
\end{align}
It is to be mentioned that $\mathbb{E} \Tilde{g}^{k,j} = g^{k,j}$, where $\mathbb{E}$ represents function's expectation. In the subsequent analysis, we assume that $\lambda$ represents an upper limit on the gradient variability across the local objectives, i.e.,
\begin{align}
    \frac{\sum_{u=1}^{U} ||g_{u}^{k,j}||_{2}^{2}}{||\sum_{u=1}^{U}g_{u}^{k,j}||_{2}^{2}} \leq \lambda. \label{eqn31}
\end{align}
In the following subsection, we delineate the foundational assumptions that underlie our convergence analysis.

\subsubsection{Assumptions}
\begin{assumption}[Smoothness and Lower Bound] \label{Assumption1}
\textit{The local objective function $f_{n}(.)$ for device $u$ is differentiable for $1 \leq u \leq U$ and satisfies the $L-smooth$ property, i.e., $||\nabla f_{u}(\mathbf{u}) - \nabla f_{u}(\mathbf{v})|| \leq L||\mathbf{u}-\mathbf{v}||, \forall{\mathbf{u},\mathbf{v}} \in \mathbb{R}^{d}$.}
\end{assumption}
\begin{assumption}[$\mu$-Polyak-Lojasiewicz (PL) Condition] \label{Assumption2}
\textit{The global objective function $f(.)$ is differentiable and satisfies the Polyak-Lojasiewicz (PL) condition with constant $\mu$, i.e., $\frac{1}{2} ||\nabla f(\boldsymbol{w})||_{2}^{2} \geq \mu (f(\boldsymbol{w})-f({\boldsymbol{w}}^{*}))$ holds $\forall \boldsymbol{w} \in \mathbb{R}^{d}$, where ${\boldsymbol{w}}^{*}$ is the optimal global solution.}
\end{assumption}
\begin{assumption}[Bounded Local Variance] \label{Assumption3}
\textit{For every local dataset $S_{u}$, $u = 1, 2, \dots, U$, we can sample an independent mini-batch $\xi_{u} \subseteq \mathcal{S}_{u}$ with $|\xi_{u}| = B$ and compute an unbiased stochastic gradient $\Tilde{g_{u}} = \frac{1}{B} \nabla f(\boldsymbol{w};\xi_{u})$, $\mathbb{E}[\Tilde{g_{u}}] = g_{u} = \frac{1}{|\mathcal{S}_{u}|} \nabla f(\boldsymbol{w};\mathcal{S}_{u})$ with the variance bounded as
\begin{align}
    \mathbb{E}[||\Tilde{g_{u}} - g_{u}||^{2}] \leq C_{1} ||g_{u}||^{2} + \frac{\sigma^{2}}{B}. 
\end{align}
where $C_1$ is a non-negative constant that is inversely related to the mini-batch size, and $\sigma$ is another constant that governs the variance bound.}
\end{assumption}
\noindent
Based on the update rule in \eqref{eqn30} and the assumption of L-smoothness for the objective function, the following inequality holds:
\begin{align}
    f(\bar{\boldsymbol{w}}^{k,j+1}) - f(\bar{\boldsymbol{w}}^{k,j}) &\leq - \eta_{k} \langle \nabla f(\bar{\boldsymbol{w}}^{k,j}), \Tilde{g}^{k,j} \rangle \nonumber \\
    &+ \frac{\eta_{k}^{2}  L}{2} ||\Tilde{g}^{k,j}||^{2}. \label{eqn37new}
\end{align}
Taking the expected value of both sides of the inequality in \eqref{eqn37new} gives us 
\begin{align}
    \mathbb{E}[f(\bar{\boldsymbol{w}}^{k,j+1}) - f(\bar{\boldsymbol{w}}^{k,j})] &\leq -\eta_{k} \mathbb{E}[\langle \nabla f(\bar{\boldsymbol{w}}^{k,j}), \Tilde{g}^{k,j} \rangle] \nonumber \\
    &+ \frac{\eta_{k}^{2}  L}{2} \mathbb{E}[||\Tilde{g}^{k,j}||^{2}]
\end{align}
By taking the average for all the local and global iterations, we get
\begin{align}
    &\frac{1}{KJ} \sum_{k=1}^{K} \sum_{j=1}^{J} \mathbb{E}[f(\bar{\boldsymbol{w}}^{k,j+1}) - f(\bar{\boldsymbol{w}}^{k,j})] \nonumber \\
    & \hspace{6em}\leq \frac{1}{KJ} \sum_{k=1}^{K} \sum_{j=1}^{J} (-\eta_{k} \mathbb{E}[\langle \nabla f(\bar{\boldsymbol{w}}^{k,j}), \Tilde{g}^{k,j} \rangle]) \nonumber \\
    & \hspace{6em}+ \frac{1}{KJ} \sum_{k=1}^{K} \sum_{j=1}^{J} \frac{\eta_{k}^{2}  L}{2} \mathbb{E}[||\Tilde{g}^{k,j}||^{2}]. \label{eqn35}
\end{align}
Moving forward, we now systematically determine bounds for each term appearing on the right-hand side of \eqref{eqn35}. Specifically, Lemma~\ref{lemma1} is utilized to ascertain a bound for the first term in this equation. Subsequently, Lemma~\ref{lemma3} is employed to derive a bound for the second term. Additionally, Lemma~\ref{lemma2} focuses on a term that originates from the analysis in Lemma~\ref{lemma1}—particularly, it addresses the final term delineated in Lemma~\ref{lemma1}, providing its bound to further improve our understanding of the overall equation's dynamics.
\noindent

\subsubsection{Convergence Rates}
\noindent We next present several lemmas that are utilized in deriving the main result.

\begin{lemma} \label{lemma1}
Let Assumption \ref{Assumption1} hold, the expected value of the inner product between the stochastic gradient and full gradient is limited by
\begin{align}
    & -\eta_{k} \mathbb{E}\bigg[\langle \nabla f(\bar{\boldsymbol{w}}^{k,j}), \Tilde{g}^{k,j} \rangle\bigg] \leq -\frac{\eta_{k}}{2}||\nabla f(\bar{\boldsymbol{w}}^{k,j})||^2 \nonumber \\
    & - \frac{\eta_{k}}{2}||\sum_{u=1}^{U}\nabla f_{u}(\boldsymbol{w}_{u}^{k,j})||^{2} + \frac{\eta_{k} L^{2}}{2} \sum_{u=1}^{U} ||\bar{\boldsymbol{w}}^{k,j} - \boldsymbol{w}_{u}^{k,j}||^{2}.
\end{align}
\end{lemma}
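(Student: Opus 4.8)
The plan is to start from the standard polarization identity for the inner product, namely $\langle \mathbf{a},\mathbf{b}\rangle = \tfrac12\|\mathbf{a}\|^2 + \tfrac12\|\mathbf{b}\|^2 - \tfrac12\|\mathbf{a}-\mathbf{b}\|^2$, and apply it with $\mathbf{a} = \nabla f(\bar{\boldsymbol{w}}^{k,j})$ and a suitable choice for $\mathbf{b}$. First I would take the expectation and use $\mathbb{E}[\Tilde{g}^{k,j}] = g^{k,j} = \frac{1}{U}\sum_{u} g_u^{k,j} = \frac{1}{U}\sum_u \nabla f_u(\boldsymbol{w}_u^{k,j})$ (up to the per-device scaling conventions introduced in the notation), so that the expected inner product becomes $\langle \nabla f(\bar{\boldsymbol{w}}^{k,j}), g^{k,j}\rangle$. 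Applying the polarization identity to this deterministic inner product yields the $-\tfrac{\eta_k}{2}\|\nabla f(\bar{\boldsymbol{w}}^{k,j})\|^2$ term, a $-\tfrac{\eta_k}{2}\|\sum_u \nabla f_u(\boldsymbol{w}_u^{k,j})\|^2$-type term (after accounting for the $1/U$ normalization, which I expect the paper to absorb into constants or the definition of $f$), and a cross term $+\tfrac{\eta_k}{2}\|\nabla f(\bar{\boldsymbol{w}}^{k,j}) - g^{k,j}\|^2$ that needs to be bounded.

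The second step is to control that leftover term $\|\nabla f(\bar{\boldsymbol{w}}^{k,j}) - g^{k,j}\|^2$. Here I would write $\nabla f(\bar{\boldsymbol{w}}^{k,j}) = \frac{1}{U}\sum_u \nabla f_u(\bar{\boldsymbol{w}}^{k,j})$ (using that the global objective is the average of the local objectives, again modulo normalization) and $g^{k,j} = \frac{1}{U}\sum_u \nabla f_u(\boldsymbol{w}_u^{k,j})$, so the difference is $\frac{1}{U}\sum_u \big(\nabla f_u(\bar{\boldsymbol{w}}^{k,j}) - \nabla f_u(\boldsymbol{w}_u^{k,j})\big)$. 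Then apply Jensen's inequality (convexity of $\|\cdot\|^2$) to pull the sum outside, followed by Assumption~\ref{Assumption1} ($L$-smoothness) to each term, giving $\frac{1}{U}\sum_u L^2 \|\bar{\boldsymbol{w}}^{k,j} - \boldsymbol{w}_u^{k,j}\|^2$, which matches the client-drift term in the claimed bound (the constant $\tfrac{\eta_k L^2}{2}$ in front comes from the $\tfrac{\eta_k}{2}$ factor times $L^2$, with the $1/U$ presumably reconciled with the scaling in the paper's gradient definitions).

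The main obstacle I anticipate is bookkeeping the normalization constants consistently: the paper defines $g_u = \frac{1}{|\mathcal{S}_u|}\nabla f_u(\boldsymbol{w})$ and writes the global objective in \eqref{eqn23} as $\sum_m \alpha_m \sum_u f_{n,m}(\boldsymbol{w}_{u,m})$ with an \emph{unnormalized} inner sum over $u$, whereas the averaged quantities $\bar{\boldsymbol{w}}^{k,j}$, $g^{k,j}$ carry an explicit $\frac{1}{U}$. Matching the claimed right-hand side — which has $\|\sum_{u=1}^U \nabla f_u(\boldsymbol{w}_u^{k,j})\|^2$ without a $1/U^2$ and $\sum_{u=1}^U\|\bar{\boldsymbol{w}}^{k,j}-\boldsymbol{w}_u^{k,j}\|^2$ without a $1/U$ — requires reading the $\frac1U$ factors as being folded into the definition of $f$ or the $\alpha_m$ weights, or treating $U$ as absorbed. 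I would resolve this by stating explicitly at the outset which normalization convention is in force and carrying it uniformly; once that is fixed, the proof is just polarization identity, then Jensen, then $L$-smoothness, with no further difficulty. The only genuinely mathematical step (as opposed to algebraic bookkeeping) is recognizing that the cross term should be split via polarization rather than Cauchy--Schwarz, since Cauchy--Schwarz would not produce the needed negative $\|\nabla f\|^2$ term.
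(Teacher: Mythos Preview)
Your proposal is correct and follows essentially the same route as the paper's own proof: take the expectation to replace $\Tilde{g}^{k,j}$ by $g^{k,j}$ via unbiasedness, apply the polarization identity $2\langle a,b\rangle = \|a\|^2 + \|b\|^2 - \|a-b\|^2$, then bound the cross term using convexity of $\|\cdot\|^2$ followed by $L$-smoothness. Your concern about the $1/U$ normalization bookkeeping is also well-founded, as the paper's own derivation silently drops these factors in exactly the places you identify.
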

\begin{proof}
See Section \ref{prooflemma1}. \renewcommand{\qedsymbol}{}
\end{proof}
\begin{lemma} \label{lemma2}
Provided that Assumption \ref{Assumption3} is fulfilled, the expected upper bound of the divergence of $\boldsymbol{w}_{u}^{k,j}$ is given as 
\begin{align}
    & \frac{1}{KJ} \sum_{k=1}^{K} \sum_{j=1}^{J} \sum_{u=1}^{U} \bigg[\mathbb{E}||\bar{\boldsymbol{w}}^{k,j} - \boldsymbol{w}_{u}^{k,j}||\bigg] \nonumber \\ 
    & \leq \frac{(2C_{1} + J(J+1))}{KJ} \eta_{k}^{2} \frac{U+1}{U} \frac{1}{KJ} \sum_{k=1}^{K} \sum_{j=1}^{J} \sum_{u=1}^{U} ||g_{u}^{k,j}||^{2} \nonumber \\ 
    & + \frac{\eta_{k}^{2} (U+1)(J+1)\sigma^{2}}{UB} \nonumber \\
    & \leq \frac{\lambda \eta_{K}^{2} (2C_{1} + J(J+1))}{KJ} \frac{U+1}{U} \frac{1}{KJ} \sum_{k=1}^{K} \sum_{j=1}^{J} \sum_{u=1}^{U} ||g_{u}^{k,j}||^{2} \nonumber \\
    & + \frac{\eta_{k}^{2} KJ (U+1)(J+1)\sigma^{2}}{UB}.
\end{align}
\end{lemma}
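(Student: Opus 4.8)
The plan is to bound the per-iteration drift $\mathbb{E}\|\bar{\boldsymbol w}^{k,j} - \boldsymbol w_u^{k,j}\|^2$ by unrolling the local SGD recursion from the last synchronization point. Since at the start of each global round $k$ all local models are initialized to the same aggregated model, we have $\boldsymbol w_u^{k,0} = \bar{\boldsymbol w}^{k,0}$, so the drift at iteration $j$ is the accumulation of $j$ stochastic-gradient steps: $\boldsymbol w_u^{k,j} - \bar{\boldsymbol w}^{k,j} = -\eta_k \sum_{\ell=0}^{j-1} (\Tilde g_u^{k,\ell} - \Tilde g^{k,\ell})$. First I would take the norm squared, apply $\mathbb{E}\|\cdot\|^2 = \|\mathbb{E}[\cdot]\|^2 + \mathrm{Var}$, and separate the ``mean'' contribution (driven by the full gradients $g_u^{k,\ell}$) from the ``variance'' contribution. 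For the mean part I would use Cauchy--Schwarz / Jensen over the $j$ summands, which produces the factor $j \leq J$ and hence a $J(J+1)$-type term after summing $\sum_{j=1}^{J} j$. For the variance part I would invoke Assumption~\ref{Assumption3}: each $\mathbb{E}\|\Tilde g_u^{k,\ell} - g_u^{k,\ell}\|^2 \leq C_1\|g_u^{k,\ell}\|^2 + \sigma^2/B$, and the averaging across $U$ devices contributes the $\frac{U+1}{U}$ factor (the $+1$ coming from the device's own term not fully cancelling in $\Tilde g_u - \Tilde g$).

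Next I would assemble these pieces into a bound of the form
\begin{align}
\sum_{u=1}^{U}\mathbb{E}\|\bar{\boldsymbol w}^{k,j} - \boldsymbol w_u^{k,j}\|^2 \leq \eta_k^2(2C_1 + J(J+1))\frac{U+1}{U}\sum_{u=1}^{U}\|g_u^{k,j}\|^2 + \frac{\eta_k^2(U+1)(J+1)\sigma^2}{UB}, \nonumber
\end{align}
then average over $k$ and $j$ by dividing by $KJ$, which gives the first inequality in the lemma statement (with the bracketed $\frac{1}{KJ}\sum\sum\sum\|g_u^{k,j}\|^2$ appearing on the right). For the second inequality I would apply the gradient-variability bound \eqref{eqn31}, $\sum_u \|g_u^{k,j}\|^2 \leq \lambda \|\sum_u g_u^{k,j}\|^2$, though given how the final line is written it appears the authors simply replace the prefactor's dependence and absorb constants, using $\eta_k \leq \eta_K$ (monotone nonincreasing step size) to pull $\eta_K$ out, and accounting for the $KJ$ that reappears when the averaged sum is rewritten.

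The main obstacle I anticipate is getting the combinatorial constants to match exactly: tracking how the double sum $\sum_{j=1}^{J}\sum_{\ell=0}^{j-1}(\cdot)$ collapses (yielding $\sum_{j} j = J(J+1)/2$, hence the $J(J+1)$ after clearing the $1/2$), and correctly propagating the $C_1$ term so it enters as $2C_1$ rather than $C_1$ — this likely comes from an additional step that re-expresses $\mathbb{E}\|\Tilde g_u\|^2$ in terms of $\|g_u\|^2$ plus variance, doubling the coefficient via $(1+C_1)$ and a crude bound. A secondary subtlety is the variance-averaging step: showing that $\mathbb{E}\|\Tilde g_u^{k,\ell} - \Tilde g^{k,\ell}\|^2$ (with independent mini-batches across devices) contributes the $\frac{U+1}{U}$ rather than a plain $\frac{1}{U}$ or $1$; I would handle this by writing $\Tilde g_u - \Tilde g = (1 - \tfrac{1}{U})(\Tilde g_u - g_u) - \tfrac{1}{U}\sum_{u'\neq u}(\Tilde g_{u'} - g_{u'}) + (g_u - g)$ and using independence to kill cross terms, then bounding $(1-\tfrac1U)^2 + \tfrac{U-1}{U^2} \leq \tfrac{U+1}{U}$-style estimates. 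Everything else — Jensen, L-smoothness is not even needed here, only Assumption~\ref{Assumption3} — should be routine.
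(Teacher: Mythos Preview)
Your plan is essentially the paper's proof: unroll the local SGD from the last synchronization, split each accumulated stochastic sum into a centered (variance) part and a mean part, apply Assumption~\ref{Assumption3} to the variance part, apply Cauchy--Schwarz to the mean part to pick up the $J(J+1)$ factor after summing $\sum_{j} j$, and then average over $k,j$ and invoke \eqref{eqn31} for the second inequality.

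The only place you diverge is in the bookkeeping for the constants $2C_1$ and $(U+1)/U$. The paper does \emph{not} decompose $\tilde g_u - \tilde g$ finely as you propose; instead it immediately uses the crude bound $\|\,\sum_\ell \tilde g_l - \tfrac{1}{U}\sum_u\sum_\ell \tilde g_u\,\|^2 \leq 2\|\sum_\ell \tilde g_l\|^2 + 2\|\tfrac{1}{U}\sum_u\sum_\ell \tilde g_u\|^2$, treating the individual device's cumulative update and the average separately, and \emph{then} splits each into centered-plus-mean. That single factor of $2$ is the source of $2C_1$, and the two separate blocks (one with weight $1$, the other with weight $1/U$ after the $\tfrac{1}{U^2}\cdot U$ collapse) add to $1+\tfrac{1}{U}=\tfrac{U+1}{U}$. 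Your finer decomposition would actually give $(U-1)/U$ and no doubling of $C_1$, i.e.\ a tighter bound than the lemma states; so if you want to reproduce the stated constants, use the paper's cruder split rather than your $(1-\tfrac1U)(\tilde g_u-g_u)-\tfrac1U\sum_{u'\neq u}(\cdot)+(g_u-g)$ route.
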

\noindent
\begin{proof}
See Section \ref{prooflemma2}. \renewcommand{\qedsymbol}{}
\end{proof}
\begin{lemma} \label{lemma3}
Under Assumption \ref{Assumption3}, the expected upper bound of $\mathbb{E}[||\Tilde{g}^{k,j}||^2]$ is expressed as
\begin{align}
    \mathbb{E}\bigg[||\Tilde{g}^{k,j}||^2\bigg] &\leq \bigg(\frac{C_{1}}{U}+1\bigg) \bigg[\sum_{u=1}^{U}||\nabla f_{u}(\boldsymbol{w}_{u}^{k,j})||^{2}\bigg] + \frac{\sigma^{2}}{UB} \nonumber \\ 
    &\leq \lambda \bigg(\frac{C_{1}}{U}+1\bigg) \bigg[\sum_{u=1}^{U}||\nabla f_{u}(\boldsymbol{w}_{u}^{k,j})||^{2}\bigg] + \frac{\sigma^{2}}{UB}.
\end{align}
\end{lemma}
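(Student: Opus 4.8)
\textbf{Proof proposal for Lemma~\ref{lemma3}.}

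The plan is to start from the definition $\Tilde{g}^{k,j} = \frac{1}{U}\sum_{u=1}^{U}\Tilde{g}_{u}^{k,j}$ and decompose the second moment into a ``variance'' part and a ``mean'' part via the identity $\mathbb{E}[\|\Tilde{g}^{k,j}\|^2] = \mathbb{E}[\|\Tilde{g}^{k,j} - g^{k,j}\|^2] + \|g^{k,j}\|^2$, which is valid because $\mathbb{E}[\Tilde{g}^{k,j}] = g^{k,j}$ as noted right before \eqref{eqn31}. For the variance part, I would first write $\Tilde{g}^{k,j} - g^{k,j} = \frac{1}{U}\sum_{u=1}^{U}(\Tilde{g}_{u}^{k,j} - g_{u}^{k,j})$, and then use the fact that the mini-batches $\xi_{u}^{k,j}$ are drawn independently across $u$ so that the cross terms vanish in expectation; this gives $\mathbb{E}[\|\Tilde{g}^{k,j} - g^{k,j}\|^2] = \frac{1}{U^2}\sum_{u=1}^{U}\mathbb{E}[\|\Tilde{g}_{u}^{k,j} - g_{u}^{k,j}\|^2]$. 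Applying Assumption~\ref{Assumption3} term by term bounds this by $\frac{1}{U^2}\sum_{u=1}^{U}\big(C_1\|g_u^{k,j}\|^2 + \frac{\sigma^2}{B}\big) = \frac{C_1}{U^2}\sum_{u=1}^{U}\|g_u^{k,j}\|^2 + \frac{\sigma^2}{UB}$.

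For the mean part, I would bound $\|g^{k,j}\|^2 = \frac{1}{U^2}\|\sum_{u=1}^{U} g_u^{k,j}\|^2$ by $\frac{1}{U}\sum_{u=1}^{U}\|g_u^{k,j}\|^2$ using Jensen's inequality (or Cauchy--Schwarz). Combining the two parts yields $\mathbb{E}[\|\Tilde{g}^{k,j}\|^2] \le \big(\frac{C_1}{U^2} + \frac{1}{U}\big)\sum_{u=1}^{U}\|g_u^{k,j}\|^2 + \frac{\sigma^2}{UB}$, which after factoring $\frac{1}{U}$ matches the first inequality in the statement (identifying $g_u^{k,j}$ with $\nabla f_u(\boldsymbol{w}_u^{k,j})$ up to the normalization conventions set earlier; note that $\frac{C_1}{U}+1$ in the paper corresponds to pulling out the common $\frac{1}{U}$, so strictly the coefficient should read $\frac{1}{U}(\frac{C_1}{U}+1)$, and I would check that the normalization $g_u = \frac{1}{|\mathcal{S}_u|}\nabla f_u(\boldsymbol{w})$ absorbs the discrepancy). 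Finally, the second inequality follows immediately by applying the bounded-gradient-dissimilarity assumption \eqref{eqn31}, which gives $\sum_{u=1}^{U}\|g_u^{k,j}\|^2 \le \lambda\|\sum_{u=1}^{U} g_u^{k,j}\|^2$; however, since the statement keeps $\sum_{u=1}^{U}\|\nabla f_u(\boldsymbol{w}_u^{k,j})\|^2$ on both sides, what is actually being invoked here is simply $1 \le \lambda$ (so that the first bound is trivially relaxed), and I would state that explicitly rather than reintroduce $\|\sum_u g_u^{k,j}\|^2$.

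The main obstacle I anticipate is not analytical but bookkeeping: reconciling the various normalization conventions (the $\frac{1}{|\mathcal{S}_u|}$ in $g_u$, the factor $\frac{1}{U}$ in the averaged gradient, and the implicit identification of $g_u^{k,j}$ with $\nabla f_u(\boldsymbol{w}_u^{k,j})$ used throughout Lemmas~\ref{lemma1}--\ref{lemma2}) so that the coefficients line up exactly as written. I would therefore fix a single consistent convention at the outset, state that $g_u^{k,j}$ and $\nabla f_u(\boldsymbol{w}_u^{k,j})$ are used interchangeably under it, and then the chain of inequalities above is essentially mechanical.
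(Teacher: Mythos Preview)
Your approach is essentially identical to the paper's: it too uses the bias--variance decomposition $\mathbb{E}\|\Tilde{g}^{k,j}\|^2 = \mathbb{E}\|\Tilde{g}^{k,j}-g^{k,j}\|^2 + \|g^{k,j}\|^2$, kills the cross terms by independence, applies Assumption~\ref{Assumption3} termwise, and bounds $\|g^{k,j}\|^2$ via $\|\sum_u a_u\|^2\le U\sum_u\|a_u\|^2$. Your bookkeeping remarks are on target: the paper's own derivation in fact lands on the coefficient $\frac{C_1+U}{U^2}=\frac{1}{U}\big(\frac{C_1}{U}+1\big)$ (so the lemma statement is missing a $1/U$), and in the last step the paper actually applies \eqref{eqn31} to replace $\sum_u\|g_u^{k,j}\|^2$ by $\lambda\|\sum_u g_u^{k,j}\|^2$, rather than the literal ``$1\le\lambda$'' relaxation that the stated inequality would suggest.
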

\begin{proof}
See Section \ref{prooflemma3}. \renewcommand{\qedsymbol}{}
\end{proof}

\noindent
\textit{\textbf{Theorem 1.}}
\textit{Let Assumptions \ref{Assumption1}, \ref{Assumption2}, \ref{Assumption3} hold, then the upper bound of the convergence rate of the global model training considering full device participation after $K$ global rounds satisfies}
\begin{align}
    & \frac{1}{KJ} \sum_{k=1}^{K} \sum_{j=1}^{J} \mathbb{E}||\nabla f(\bar{\boldsymbol{w}}^{k,j})||^{2} \leq \frac{2 [f(\bar{\boldsymbol{w}}_{1}^{0}) - f^{*}]}{\eta_{k} KJ} + \frac{L \eta_{k} \sigma^{2}}{UB} \nonumber \\
    & + \frac{2 \eta_{k}^{2} \sigma^{2} L^{2} (J+1)}{B} \bigg(1+\frac{1}{U}\bigg). \label{eqn42new}
\end{align}
\begin{proof}
See Section \ref{prooftheorem1}. \renewcommand{\qedsymbol}{}
\end{proof}

\begin{remark}
The convergence upper bound derived in Theorem 1 reveals several important insights about the behavior of the proposed FML algorithm under full device participation. Specifically, the bound in \eqref{eqn42new} shows that the expected gradient norm decreases over time, ensuring convergence of the global model. The first term in the bound, $\frac{2 [f(\bar{\boldsymbol{w}}_{1}^{0}) - f^{*}]}{\eta_{k} KJ}$, indicates that increasing the number of global rounds $K$ and local iterations $J$ improves convergence by reducing the gap between the current and optimal objective values. The second and third terms reflect the impact of stochastic noise in gradient estimation, where higher mini-batch size $B$ and a larger number of participating UAVs $U$ reduce the variance and thus improve convergence. In particular, the presence of $1/U$ in both terms demonstrates that involving more UAVs in training helps smooth out local variations and noise, resulting in a more stable and efficient training process. Overall, this result confirms that the convergence of the proposed algorithm is positively influenced by the number of global rounds, local iterations, mini-batch size, and UAV count. These are key factors that can be tuned to balance training efficiency and stability in practical deployments.
\end{remark}

 \subsection{Detailed Proofs for Convergence Analysis}
 In this section, we present proofs of lemmas and theorems used the above section.
\subsubsection{Proof of Lemma \ref{lemma1}} \label{prooflemma1}
As mentioned before, let $\mathcal{U} = \{1, 2, \dots, U\}$ denote the set of UAVs for modality cluster $m$, and let $\Tilde{g}^{k,j} = \frac{1}{U} \sum_{u \in \mathcal{U}} \Tilde{g}_{u}^{k,j}$ represent the average of their local stochastic gradients at local iteration $j$ during global round $k$. We have
\begin{align}
    &- \mathbb{E}_{\{\xi_{1}^{k,j}, \dots, \xi_{U}^{k,j} | \boldsymbol{w}_{1}^{k,j}, \dots, \boldsymbol{w}_{U}^{k,j}\}} \nonumber \\
    &\hspace{7em} \mathbb{E}_{\{1, 2, \dots\, U \}\in \mathcal{U}} \bigg[\langle \nabla f(\bar{\boldsymbol{w}}^{k,j}), \Tilde{g}^{k,j} \rangle\bigg] \nonumber \\
    &= - \mathbb{E}_{\{\xi_{1}^{k,j}, \dots, \xi_{U}^{k,j} | \boldsymbol{w}_{1}^{k,j}, \dots, \boldsymbol{w}_{U}^{k,j}\}} \nonumber \\
    & \hspace{7em} \mathbb{E}_{\{1, 2, \dots\, U \}\in \mathcal{U}} \bigg[\langle \nabla f(\bar{\boldsymbol{w}}^{k,j}), \frac{1}{U}\sum_{u \in \mathcal{U}}\Tilde{g}_{u}^{k,j} \rangle\bigg] \nonumber \\
    & \hspace{6em} + || \sum_{u=0}^{U} \bigg(\nabla f_{u}(\bar{\boldsymbol{w}}^{k,j}) - \nabla f_{u}(\boldsymbol{w}_{u}^{k,j}) \bigg) ||_{2}^{2}\bigg] \nonumber \\
    &\stackrel{\textcircled{\footnotesize 3}}{\leq} \frac{1}{2} \bigg[ 
    -||\nabla f(\bar{\boldsymbol{w}}^{k,j})||_{2}^{2} - || \sum_{u=0}^{U} \nabla f_{u}(\boldsymbol{w}_{u}^{k,j})||_{2}^{2} \nonumber \\
    & \hspace{8em} + \sum_{u=0}^{U} || \nabla f_{n}(\bar{\boldsymbol{w}}^{k,j}) - \nabla f_{u}(\boldsymbol{w}_{u}^{k,j}) ||_{2}^{2}\bigg] \nonumber \\
    &\stackrel{\textcircled{\footnotesize 4}}{\leq} \frac{1}{2} \bigg[ 
    -||\nabla f(\bar{\boldsymbol{w}}^{k,j})||_{2}^{2} - || \sum_{u=0}^{U} \nabla f_{u}(\boldsymbol{w}_{u}^{k,j})||_{2}^{2} \nonumber \\
    & \hspace{12em} + \sum_{u=0}^{U} L^{2} || \bar{\boldsymbol{w}}^{k,j} - \boldsymbol{w}_{u}^{k,j}||_{2}^{2}\bigg], 
\end{align} 

where \textcircled{\footnotesize 1} is due to the fact that random variables $\xi_{u}^{k,j}$ and $\mathcal{U}$ are independent, \textcircled{\footnotesize 1} is because \textcircled{\footnotesize 2} $2\langle a,b \rangle = ||a||^{2} + ||b||^{2} - ||a-b||^{2}$, \textcircled{\footnotesize 3} holds due to the convexity of $||.||_{2}$, and \textcircled{\footnotesize 4} follows from Assumption \ref{Assumption1}.

\subsubsection{Proof of Lemma \ref{lemma2}} \label{prooflemma2}
We denote $k = i_{c}$ as the most recent global communication round, hence $\bar{\boldsymbol{w}}^{i_{c}+1} = \frac{1}{U} \sum_{u \in \mathcal{U}} \boldsymbol{w}_{u}^{i_{c + 1}}$. The local solution at device $u$ at any particular iteration $i > i_{c}$, where $i$ is assumed to represent the most recent iteration, encompassing all global and local iterations up to the current point, is written as:
$
    \boldsymbol{w}_{u}^{k,j} = \boldsymbol{w}_{u}^{i} = \boldsymbol{w}_{u}^{i-1} - \eta_{i_{c}} \Tilde{g}_{u}^{i-1} = \bar{\boldsymbol{w}}^{i_{c}+1} - \sum_{z=i_{c}+1}^{i-1} \eta_{i_{c}} \Tilde{g}_{u}^{z}
$
Next, we calculate the average virtual model at iteration $i$ as follows:
$
    \bar{\boldsymbol{w}}^{i} = \bar{\boldsymbol{w}}^{i_{c}+1} - \frac{1}{U} \sum_{u \in \mathcal{U}} \sum_{z=i_{c}+1}^{i-1} \eta_{i_{c}} \Tilde{g}_{u}^{z}.
$
Without loss of generality, assume that $i = s_{t} J + r$, where $s_{t}$ and $r$ represent the indices of global communication round and local updates, respectively. Now, consider that for $i_{c}+1 < i \leq i_{c} + T$, $\mathbb{E}_{i}||\bar{\boldsymbol{w}}^{i} - \boldsymbol{w}_{u}^{i}||$ is independent of time $i \leq i_{c}$ for $1 \leq u \leq U$. Consequently, for all iterations $1 \leq i \leq I$, where $I = KJ$, we can express,
\begin{align}
    & \frac{1}{KJ} \sum_{k=1}^{K} \sum_{j=1}^{J} \sum_{u=1}^{U} \mathbb{E}||\bar{\boldsymbol{w}}^{k,j} - \boldsymbol{w}_{u}^{k,j}||^{2} \nonumber \\
    &= \frac{1}{I} \sum_{s_{t}=1}^{\frac{I}{T} - 1} \sum_{r=1}^{T} \sum_{u=1}^{U} \mathbb{E}||\bar{\boldsymbol{w}}^{s_{t} E + r} - \boldsymbol{w}_{u}^{s_{t} E + r}||^{2}. 
\end{align}
We bound the term $\mathbb{E}||\bar{\boldsymbol{w}}^{i} - \boldsymbol{w}_{l}^{i}||^{2}$ for $i_{c}+1 \leq i = s_{t} J + r \leq i_{c} + J $ in three steps: (1) First, we connect this quantity to the variance between the stochastic gradient and the full gradient, (2) Then, we apply Assumption \ref{Assumption1} regarding unbiased estimation and i.i.d. mini-batch sampling, (3) We use Assumption \ref{Assumption3} to bound the final terms. In the following parts, we proceed to implement each of these steps. It is to note that $l$ is associated with individual device while $u$ is used for summing over devices.

\noindent
\textit{Relating to variance:}
\begin{align}
    &\mathbb{E}||\bar{\boldsymbol{w}}^{s_{t} E + r} - \boldsymbol{w}_{l}^{s_{t} E + r}||^{2} \nonumber \\
    &= \mathbb{E}|| \bar{\boldsymbol{w}}^{i_{c} + 1} - \bigg[ \sum_{z=i_{c}+1}^{i-1} \eta_{i_{c}} \Tilde{g}_{l}^{z} \bigg] - \bar{\boldsymbol{w}}^{i_{c}+1} \nonumber \\
    &\hspace{12em} + \bigg[ \frac{1}{U} \sum_{u \in \mathcal{U}} \sum_{z=i_{c}+1}^{i-1} \eta_{i_{c}} \Tilde{g}_{u}^{z} \bigg] ||^{2} \nonumber \\
    &\stackrel{\textcircled{\footnotesize 1}}{=} \mathbb{E}|| \sum_{z=1}^{r} \eta_{i_{c}} \Tilde{g}_{l}^{s_{t}+z} - \frac{1}{U} \sum_{u \in \mathcal{U}} \sum_{z=1}^{r} \eta_{i_{c}} \Tilde{g}_{u}^{s_{t}+z}||^{2} \nonumber \\
    &\stackrel{\textcircled{\footnotesize 2}}{=} 2 \mathbb{E}\bigg( \bigg[ ||\sum_{z=1}^{r} \eta_{i_{c}} \bigg[\Tilde{g}_{l}^{s_{t}J+z} - g_{l}^{s_{t}J+z}\bigg]||^{2} \nonumber \\
    &+ ||\sum_{z=1}^{r} \eta_{i_{c}} g_{l}^{s_{t}J+z}||^{2} \bigg] + ||\frac{1}{U} \sum_{u \in \mathcal{U}} \sum_{z=1}^{r} \eta_{i_{c}} \nonumber \\
    &\times \bigg[\Tilde{g}_{u}^{s_{t}J+z} - g_{u}^{s_{t}J+z}\bigg]||^{2} + ||\frac{1}{U} \sum_{u \in \mathcal{U}} \sum_{z=1}^{r} \eta_{i_{c}} g_{u}^{s_{t}J+z}||^{2}\bigg),
\end{align}
where \textcircled{\footnotesize 1} holds because $i = s_{t}J+r \leq i_{c} + J$ and \textcircled{\footnotesize 2} comes from Assumption \ref{Assumption1}.
\noindent
\textit{Unbiased estimation and i.i.d. sampling:}
\begin{align}
    &= 2 \mathbb{E}\bigg( \bigg[ \sum_{z=1}^{r} \eta_{i_{c}}^{2} ||\Tilde{g}_{l}^{s_{t}J+z} - g_{l}^{s_{t}J+z}||^{2} \nonumber \\
    &  \quad \quad + \sum_{p \neq q \vee l \neq v} \bigg\langle \eta_{i_{c}} \Tilde{g}_{l}^{p}-\eta_{i_{c}} g_{l}^{p}, \eta_{i_{c}} \Tilde{g}_{v}^{q}-\eta_{i_{c}} g_{v}^{q} \bigg\rangle \nonumber \\
    & \hspace{2em} + ||\sum_{z=1}^{r}\eta_{i_{c}} g_{l}^{s_{t}J+z}||^{2}\bigg] \nonumber \\
    &  \quad \quad + \frac{1}{U^{2}} \sum_{l \in \mathcal{U}} \sum_{z=1}^{r} \eta_{i_{c}}^{2} ||\Tilde{g}_{l}^{s_{t}J+z} - g_{l}^{s_{t}J+z}||^{2} \nonumber \\
    &  \quad \quad + \frac{1}{U^{2}} \sum_{p \neq q \vee l \neq v} \bigg\langle \eta_{i_{c}} \Tilde{g}_{l}^{p}-\eta_{i_{c}} g_{l}^{p}, \eta_{i_{c}} \Tilde{g}_{v}^{q}-\eta_{i_{c}} g_{v}^{q} \bigg\rangle \nonumber \\
    & \hspace{2em} + ||\frac{1}{U} \sum_{u \in \mathcal{U}} \sum_{z=1}^{r} \eta_{i_{c}} g_{u}^{s_{t}J+z} ||^{2} \bigg) \nonumber \\
\end{align}
\begin{align}   
    &= 2 \bigg( \bigg[ \sum_{z=1}^{r} \eta_{i_{c}}^{2} \mathbb{E} ||\Tilde{g}_{l}^{s_{t}J+z} - g_{l}^{s_{t}J+z}||^{2} \nonumber \\
    & \hspace{2em} + r \sum_{z=1}^{r}\eta_{i_{c}}^{2} \mathbb{E} ||g_{l}^{s_{t}J+z}||^{2}\bigg] \nonumber \\
    & \quad \quad + \frac{1}{U^{2}} \sum_{u \in \mathcal{U}} \sum_{z=1}^{r} \eta_{i_{c}}^{2} \mathbb{E} ||\Tilde{g}_{u}^{s_{t}J+z} - g_{u}^{s_{t}J+z}||^{2} \nonumber \\
    & \hspace{2em} + \frac{r}{U^{2}}  \sum_{u \in \mathcal{U}} \sum_{z=1}^{r} \eta_{i_{c}}^{2} \mathbb{E} ||g_{u}^{s_{t}J+z} ||^{2} \bigg). \label{eqn55}
\end{align}

\noindent
\textit{Using Assumption \ref{Assumption3}:}
Our next step is to bound the terms in \eqref{eqn55} using Assumption 3 as follows:
\begin{align}
    & \mathbb{E} ||\bar{\boldsymbol{w}}^{k,j} - \boldsymbol{w}_{l,k}^{t}||^{2} \leq 2 \bigg( \bigg[ \sum_{z=1}^{r} \eta_{i_{c}}^{2} \bigg[ C_{1} ||g_{l}^{s_{t}J+z}||^{2} + \frac{\sigma^{2}}{B} \bigg] \nonumber \\
    &+ r \sum_{z=1}^{r} \eta_{i_{c}}^{2} || g_{l}^{s_{t}J+z}||^{2} + \frac{1}{U^{2}} \sum_{u \in \mathcal{U}} \sum_{z=1}^{r} \eta_{i_{c}}^{2} \bigg[ C_{1} ||g_{u}^{s_{t}J+z}||^{2} \nonumber \\
    & \hspace{5em} + \frac{\sigma^{2}}{B} \bigg] + \frac{r}{U^{2}} \sum_{u \in \mathcal{U}} \sum_{z=1}^{r} \eta_{i_{c}}^{2} || g_{u}^{s_{t}J+z}||^{2} \bigg) \nonumber \\
    &= 2 \bigg( \bigg[ \sum_{z=1}^{r} \eta_{i_{c}}^{2} C_{1} ||g_{l}^{s_{t}J+z}||^{2} + \sum_{z=1}^{r} \eta_{i_{c}}^{2} \frac{\sigma^{2}}{B} \nonumber \\
    & \hspace{1em} + r \sum_{z=1}^{r} \eta_{i_{c}}^{2} || g_{l}^{s_{t}J+z}||^{2} \bigg] + \frac{1}{U^{2}} \sum_{u \in \mathcal{U}} \sum_{z=1}^{r} \eta_{i_{c}}^{2} C_{1} ||g_{u}^{s_{t}J+z}||^{2} \nonumber \\
    & \hspace{4em} + \sum_{z=1}^{r} \eta_{i_{c}}^{2} \frac{\sigma^{2}}{UB} + \frac{r}{U^{2}} \sum_{u \in \mathcal{U}} \sum_{z=1}^{r} \eta_{i_{c}}^{2} || g_{u}^{s_{t}J+z}||^{2} \bigg). \label{eqn56} \nonumber \\
\end{align}
Now we determine the upper bound for $\sum_{r=1}^{T} \sum_{u=1}^{U} [\mathbb{E} ||\bar{\boldsymbol{w}}^{k,j} - \boldsymbol{w}_{u}^{k,j}||]$ using \eqref{eqn56} as follows:
\begin{align*}
    &\sum_{r=1}^{T} \sum_{u=1}^{U} \bigg[\mathbb{E} ||\bar{\boldsymbol{w}}^{s_{t}J+z} - \boldsymbol{w}_{u}^{s_{t}J+z}||\bigg] \nonumber \\
\end{align*}
\begin{align}
    &\stackrel{\textcircled{\footnotesize 1}}{\leq} 2 \eta_{i_{c}}^{2} \bigg( \bigg[ \sum_{z=1}^{T}  C_{1} \sum_{l=1}^{U} ||g_{l}^{s_{t}J+z}||^{2} + \frac{J(J+1) \sigma^{2}}{2B} \nonumber \\
    &+ \frac{J(J+1)}{2} \sum_{z=1}^{T} \sum_{l=1}^{U} ||g_{l}^{s_{t}J+z}||^{2} + \frac{1}{U^{2}} \sum_{u \in \mathcal{U}} \sum_{z=1}^{T}  C_{1} \nonumber \\
    &\times ||g_{u}^{s_{t}J+z}||^{2} + \frac{J(J+1)\sigma^{2}}{2UB} \nonumber \\
    &+\frac{J(J+1)}{2U^{2}} \sum_{u \in \mathcal{U}} \sum_{z=1}^{T} || g_{u}^{s_{t}J+z}||^{2} \nonumber \\
    &= \frac{\eta_{i_{c}}^{2} (U+1)}{U} \bigg( \bigg[ (2 C_{1} + J(J+1)) \sum_{z=1}^{T} \sum_{u=1}^{U} || g_{u}^{s_{t}J+z}||^{2} \bigg] \nonumber \\
    & \hspace{12em} + \frac{J(J+1)\sigma^{2}}{B} \bigg), \label{eqn57}
\end{align}
where \textcircled{\footnotesize 1} comes from the fact that the terms $||g_{l}||^{2}$ are positive. Now, summing over global communication rounds in \eqref{eqn57} yields:
\begin{align}
     &\sum_{s_{t}=1}^{I/T-1} \sum_{r=1}^{T} \sum_{u=1}^{U} \bigg[\mathbb{E} ||\bar{\boldsymbol{w}}^{s_{t}J+z} - \boldsymbol{w}_{u}^{s_{t}J+z}||\bigg] \nonumber \\
     &\leq \frac{\eta_{i_{c}}^{2} (U+1)}{U} \bigg( \bigg[ (2 C_{1} \nonumber \\
     & \hspace{2em} + J(J+1)) \sum_{s_{t}=1}^{I/T-1} \sum_{z=1}^{T} \sum_{u=1}^{U} || g_{u}^{s_{t}J+z}||^{2} \bigg] \nonumber \\
     & \hspace{6em} + \frac{I(J+1)\sigma^{2}}{B} \bigg) \nonumber \\
     &= \frac{\eta_{i_{c}}^{2} (U+1)}{U} \bigg( \bigg[ (2 C_{1} + J(J+1)) \sum_{i=1}^{I} \sum_{u=1}^{U} || g_{u}^{i}||^{2} \bigg] \nonumber \\
     &\hspace{6em} + \frac{I(J+1)\sigma^{2}}{B} \bigg),
\end{align}
which leads to
\begin{align}
     &\frac{1}{I} \sum_{i=1}^{I} \sum_{u=1}^{U} \bigg[\mathbb{E} ||\bar{\boldsymbol{w}}^{i} - \boldsymbol{w}_{n}^{i}||\bigg] \nonumber \\
     &\stackrel{\textcircled{\footnotesize 1}}{\leq} \frac{(2 C_{1} + J(J+1))}{I} \frac{\lambda \eta_{i_{c}}^{2} (U+1)}{U} \sum_{i=0}^{I-1} || \sum_{u=1}^{U} g_{u}^{i}||^{2} \nonumber \\
     & \hspace{8em} + \frac{\eta_{i_{c}}^{2} I(U+1)(J+1)\sigma^{2}}{UB}, \label{eqn59}
\end{align}
where \textcircled{\footnotesize 1} follows from the definition of weighted gradient diversity and upper bound assumption in \eqref{eqn31}. Finally, \eqref{eqn59} can be written as:
\begin{align}
     &\frac{1}{KJ} \sum_{k=1}^{K} \sum_{j=1}^{J} \sum_{u=1}^{U} \bigg[\mathbb{E} ||\bar{\boldsymbol{w}}^{k,j} - \boldsymbol{w}_{u}^{k,j}||\bigg] \nonumber \\
     &\leq \frac{(2 C_{1} + J(J+1))}{KJ} \frac{\lambda \eta_{i_{c}}^{2} (U+1)}{U} \sum_{k=1}^{K} \sum_{j=1}^{J} || \sum_{u=1}^{U} g_{u}^{k,j}||^{2} \nonumber \\
     & \hspace{6em} + \frac{\eta_{i_{c}}^{2} KJ(U+1)(J+1)\sigma^{2}}{UB}. \label{eqn60}
\end{align}

\subsubsection{Proof of Lemma \ref{lemma3}} \label{prooflemma3}
We have
\begin{align}
    &\mathbb{E}\bigg[||\Tilde{g}^{k,j} - g^{k,j}||^{2}\bigg] \stackrel{\textcircled{\footnotesize 1}}{=} \mathbb{E}\bigg[||\frac{1}{U} \sum_{u=0}^{U} \Tilde{g}_{u}^{k,j} - g_{u}^{k,j}||^{2}\bigg] \nonumber \\
    &= \frac{1}{U^{2}} \mathbb{E}\bigg[\sum_{u=0}^{U} ||(\Tilde{g}_{u}^{k,j} - g_{u}^{k,j})||^{2}\bigg] \nonumber \\
    & \hspace{6em} + \sum_{i \neq u} \langle \Tilde{g}_{i,k}^{t} - g_{i,k}^{t}, \Tilde{g}_{u}^{k,j} - g_{u}^{k,j} \rangle \nonumber \\
    & \hspace{4em} + \frac{1}{U^{2}} \sum_{i \neq u} \langle  \mathbb{E} \bigg[\Tilde{g}_{i,k}^{t} - g_{i,k}^{t}\bigg], \mathbb{E} \bigg[\Tilde{g}_{u}^{k,j} - g_{u}^{k,j} \bigg] \rangle  \nonumber \\
    &\stackrel{\textcircled{\footnotesize 2}}{\leq} \frac{1}{U^{2}} \sum_{u=0}^{U} \bigg[ C_{1} ||g_{u}^{k,j}||^{2} + C_{2}^{2}\bigg] = \frac{C_{1}}{U^{2}} \sum_{u=0}^{U} ||g_{u}^{k,j}||^{2} + \frac{C_{2}^{2}}{U},
\end{align}
where we use the definition of $\Tilde{g}^{k,j}$ and $g^{k,j}$ in \textcircled{\footnotesize 1} and \textcircled{\footnotesize 2} directly follows from Assumption \ref{Assumption3}. It is important to note that Assumption \ref{Assumption3} implies $\mathbb{E}[\Tilde{g}_{u}^{k,j}] = g_{u}^{k,j}$. As a result, we obtain
\begin{align}
    \mathbb{E}\bigg[ ||\Tilde{g}^{k,j}||^{2} \bigg] &= \mathbb{E}\bigg[ ||\Tilde{g}^{k,j} - \mathbb{E} [ \Tilde{g}^{k,j} ]||^{2} \bigg] + ||\mathbb{E} [ \Tilde{g}^{k,j} ]||^{2} \nonumber \\
    &\stackrel{\textcircled{\footnotesize 1}}{\leq} \frac{C_{1}}{U^{2}} \sum_{u=0}^{U} ||g_{u}^{k,j}||^{2} + \frac{C_{2}^{2}}{U} + \frac{1}{U} \sum_{u=0}^{U} ||g_{u}^{k,j}||^{2} \nonumber \\
    &= \bigg(\frac{C_{1}+U}{U^{2}}\bigg) \sum_{u=0}^{U} ||g_{u}^{k,j}||^{2} + \frac{C_{2}^{2}}{U},
\end{align}
where \textcircled{\footnotesize 1} yields because $||\sum_{i=1}^{m} a_{i}||^{2} \leq m \sum_{i=1}^{m} ||a_{i}||^{2}$, with $a_{i} \in \mathbb{R}^{n}$. Using the upper bound over the weighted gradient diversity, $\lambda$,
\begin{align}
    \mathbb{E}\bigg[ ||\Tilde{g}^{k,j}||^{2}\bigg] \leq \lambda \bigg(\frac{C_{1}+U}{U^{2}}\bigg) ||\sum_{u=0}^{U} g_{u}^{k,j}||^{2} + \frac{C_{2}^{2}}{U},
\end{align}
results in the stated bound.

\subsection{Proof of Theorem 1} \label{prooftheorem1}
Using Lemma \ref{lemma1} and Lemma \ref{lemma2}, we continue to further upper bound \eqref{eqn35} as follows:
\begin{align}
    &\frac{1}{KJ} \sum_{k=1}^{K} \sum_{j=1}^{J} \mathbb{E}[f(\bar{\boldsymbol{w}}^{k,j+1}) - f(\bar{\boldsymbol{w}}^{k,j})] \nonumber \\
    &\leq \frac{1}{KJ} \sum_{k=1}^{K} \sum_{j=1}^{J} \Bigg(-\eta_{k} \mathbb{E}\bigg[\langle \nabla f(\bar{\boldsymbol{w}}^{k,j}), \Tilde{g}^{k,j} \rangle\bigg]\Bigg) \nonumber \\
    & \hspace{6em} + \frac{1}{KJ} \sum_{k=1}^{K} \sum_{j=1}^{J} \frac{\eta_{k}^{2}  L}{2} \mathbb{E}\bigg[||\Tilde{g}^{k,j}||^{2}\bigg] \nonumber \\
    &= \frac{1}{KJ} \sum_{k=1}^{K} \sum_{j=1}^{J} \bigg(-\frac{\eta_{k}}{2}||\nabla f(\bar{\boldsymbol{w}}^{k,j})||^2 \nonumber \\
    &- \frac{\eta_{k}}{2}||\sum_{u=1}^{U}\nabla f_{u}(\boldsymbol{w}_{u}^{k,j})||^{2}\bigg) \nonumber \\
    &+ \frac{\lambda \eta_{k} L^{2}}{2KJ} \frac{U+1}{U} \Bigg(\lambda \bigg[2C_{1}+J(J+1)\bigg] \eta_{k}^{2} \frac{1}{KJ} \sum_{k=1}^{K} \sum_{j=1}^{J}||^2 \nonumber \\
    &- \frac{\eta_{k}}{2}||\sum_{u=1}^{U}\nabla f_{u}(\boldsymbol{w}_{u}^{k,j})||^{2} \Bigg) \nonumber \\
    &+ \frac{KJ(L+1)\eta_{k}^{2} \sigma^{2}}{B} + \frac{1}{KJ} \sum_{k=1}^{K} \sum_{j=1}^{J} \frac{\lambda L \eta_{k}^{2}}{2} \lambda \bigg(\frac{C_{1}}{U}+1\bigg) \nonumber \\
    &\bigg[||\sum_{u=1}^{U}\nabla f_{u}(\boldsymbol{w}_{u}^{k,j})||^{2}\bigg] + \frac{L \eta_{k}^{2}}{2} \frac{\sigma^{2}}{UB}. \label{39} 
\end{align}

From \eqref{39}, we have
\begin{align}
    &\frac{1}{KJ} \sum_{k=1}^{K} \sum_{j=1}^{J} \mathbb{E}[f(\bar{\boldsymbol{w}}^{k,j+1}) - f(\bar{\boldsymbol{w}}^{k,j})] \nonumber \\
    &\stackrel{\textcircled{\footnotesize 1}}{\leq} - \frac{1}{KJ} \sum_{k=1}^{K} \sum_{j=1}^{J} \frac{\eta_{k}}{2}||\nabla f(\bar{\boldsymbol{w}}^{k,j})||^2 \nonumber \\
    &+ \frac{\eta_{k}^{3} L^{2} (J+1) \sigma^{2}}{B} \bigg(\frac{U+1}{U}\bigg) + \frac{L \eta_{k}^{2}}{2} \frac{\sigma^{2}}{UB}, \label{eqn40}
\end{align}
where \textcircled{\footnotesize 1} follows if the following condition holds:
\begin{align}
    &-\frac{\eta_{k}}{2} + \frac{\lambda (U+1) L^{2} \eta_{k}^{3} [2C_{1}+J(J+1)]}{2U} \nonumber \\
    & \hspace{6em} + \frac{\lambda L \eta_{k}^{2}}{2} \bigg(\frac{C_{1}}{U}+1\bigg) \leq 0.
\end{align}
In any kind of FL framework, setting the coefficient of the local gradients' sum to zero helps control variance from diverse client updates, ensuring stable convergence. This condition limits the influence of individual clients on the global model, preventing oscillations or divergence. It keeps updates bounded, promoting reliable convergence toward an optimal solution.
By rearranging \eqref{eqn40}, we get
\begin{align}
    \frac{1}{KJ} \sum_{k=1}^{K} \sum_{j=1}^{J} \mathbb{E}||\nabla f(\bar{\boldsymbol{w}}^{k,j})||^{2} &\leq \frac{2 [f(\bar{\boldsymbol{w}}^{1,0}) - f^{*}]}{\eta_{k} KJ} + \frac{L \eta \sigma^{2}}{UB} \nonumber \\
    &+ \frac{2 \eta_{k}^{2} \sigma^{2} L^{2} (J+1)}{B} \bigg(1+\frac{1}{U}\bigg). \label{eqn42new1}
\end{align}
The convergence upper bound presented in \eqref{eqn42new1} is dedicated to modality cluster $m$. We now bring notation $m$ back into our analysis in order to find the upper bound for the unified global model across all modalities:
\begin{align}
    \frac{1}{KJ} \sum_{k=1}^{K} \sum_{j=1}^{J} \mathbb{E}||\nabla f_{m}(\bar{\boldsymbol{w}}_{m}^{k,j})||^{2} &\leq \frac{2 [f_{m}(\bar{\boldsymbol{w}}_{m}^{1,0}) - f_{m}^{*}]}{\eta_{k} KJ} + \frac{L \eta \sigma^{2}}{UB} \nonumber \\
    &+ \frac{2 \eta_{k}^{2} \sigma^{2} L^{2} (J+1)}{B} \bigg(1+\frac{1}{U}\bigg). \label{eqn42new2}
\end{align}
Taking summation in both sides of \eqref{eqn42new2} over all the modality clusters, we get
\begin{align}
    &\frac{1}{KJ} \sum_{k=1}^{K} \sum_{j=1}^{J} \mathbb{E}||\sum_{m=1}^{M} \nabla f_{m}(\bar{\boldsymbol{w}}_{m}^{k,j})||^{2} \nonumber \\
    &\leq \frac{2 \sum_{m=1}^{M} [f_{m}(\bar{\boldsymbol{w}}_{m}^{1,0}) - f_{m}^{*}]}{\eta_{k} KJ} + \frac{M L \eta \sigma^{2}}{UB} \nonumber \\
    & \hspace{10em} + \frac{M 2 \eta_{k}^{2} \sigma^{2} L^{2} (J+1)}{B} \bigg(1+\frac{1}{U}\bigg). \label{eqn42new3}
\end{align}
Finally we arrive at
\begin{align}
    &\frac{1}{KJ} \sum_{k=1}^{K} \sum_{j=1}^{J} \mathbb{E}||\nabla f(\bar{\boldsymbol{w}_{g}})||^{2} \leq \frac{2 \sum_{m=1}^{M} [f_{m}(\bar{\boldsymbol{w}}_{m}^{1,0}) - f_{m}^{*}]}{\eta_{k} KJ} \nonumber \\
    & \hspace{10em} + \frac{M L \eta \sigma^{2}}{UB} \nonumber \\
    & \hspace{10em} + \frac{M 2 \eta_{k}^{2} \sigma^{2} L^{2} (J+1)}{B} \bigg(1+\frac{1}{U}\bigg). \label{eqn42new3}
\end{align}

In non-convex optimization, achieving a global minimum is often infeasible due to the landscape's complexity, filled with local minima and saddle points. Instead of focusing on bounding the distance between consecutive points, an alternative approach is to bound the squared norm of the gradient estimate. This approach helps gauge how close we are to a stationary point, where the gradient’s magnitude is minimal, indicating minimal change. By upper bounding the squared gradient, we can evaluate convergence towards a solution that may not be globally optimal, however is practically effective in reducing the loss.

\section{Latency Optimization for the Proposed Multimodal FL Framework}
In the FML framework, we have multiple clusters to consider. For simplicity, we focus on analyzing the round-trip latency for a specific data modality cluste $m$, without loss of generality, and therefore omit the notation $m$ in our discussion. The latency for UAV $u$ in communication round $k$ consists of five well-defined parts: data sensing, local model training, local embeddings and model uploading, server-side model training, and global model downloading. Each of these latency components is explicitly formulated as follows.  

\begin{enumerate}[label=\textit{\textbf{\arabic*}})]
    \item \textit{\textbf{Data Sensing Time:}} We assume that each UAV has a group of $C$ static targets within its range for sensing. The set of these $C$ targets is represented as $\mathcal{C} = \{1,2,\dots,C\}$. For radar sensing, the response of the target, denoted as $G_{c}$, is expressed as $G_{c} = g_{c} \hat{\beta} g_{c}$. Here, $\hat{\beta}$ is a constant dependent on the reflective properties of the target, $g_{c}$ represents the path loss which follows the free-space loss model \cite{dehkordi2022beam}, \cite{lu2021resource}, \cite{ji2020energy} and is given by $g_{c,u} = \frac{\hat{\alpha}}{\|q_{0}-q_{c,u}\|^{2}}, \quad \forall{c \in \mathcal{C}, u \in \mathcal{U}}$, where
 $q_{c,u}$ denotes the location of the $c^{\text{th}}$ target. Referring to \cite{chiriyath2015inner}, we formulate the radar estimation information rate as 
    \begin{align} \footnotesize
        R_{c,u}^{(k),\text{rad}} = \frac{\delta}{2 \mu} &\log_{2}(1 + \frac{2 \sigma_{\text{pre}}^{2} \hat{\gamma}^{2} B^{3} \mu G_{c,u} p_{\text{se},u}^{(k)}}{\sigma^{2}}),
    \end{align}
    where $\delta$ is the radar transmission duty ratio, $\mu$ denotes the radar pulse duration, $\hat{\gamma}$ represents a constant determined by the radar waveform shape, and $\sigma_{\text{pre}}^{2}$ indicates the variance of the predicted radar return. It is crucial that the radar estimation information rate is at least equal to a predefined threshold, denoted as  $\nu$, leading to
    \vspace{-5pt}
    \begin{equation}
        R_{c,u}^{(k),\text{rad}} \ge x_{c,u}^{(k)} \nu, \quad \forall{k \in \mathcal{K}, u \in \mathcal{U}},
    \end{equation}
    where $x_{c,u}^{(k)} \in \{0,1\}$ is the sensing scheduling. When $x_{c,u}^{(k)}=1$, it means that UAV $u$ chooses to sense target $c$ at global round $k$, while $x_{c,u}^{(k)}=0$ indicates that target $c$ is not sensed by UAV $u$. We assume that each target is selected and sensed by at most one UAV during each global round, i.e.,
    \begin{equation}
        \sum_{u=1}^{U} x_{c,u}^{(k)} \le 1, \quad \forall{c \in \mathcal{C}, k \in \mathcal{K}.}
    \end{equation}
    At each global round, each UAV performs radar sensing on its chosen static ground target located within its coverage area to capture radar echo signal reflected from it. This signal is converted into a set of data bits used for local model training. If UAV $u$ generates $D_{u}^{(k)}$ samples at communication round $k$, the data sensing time for UAV $u$ at round $k$ is 
    \begin{align}
        \mathbf{T}_{\text{se},u}^{(k)} = \frac{x_{c,u}^{(k)} D_{u}^{(k)}}{R_{c,u}^{(k),\text{rad}}},
    \end{align}
    where $R_{c,u}^{(k),\text{rad}}$ represents the radar measurement information rate at round $k$, indicating the amount of information UAV $u$ can extract from the radar measurements of the target $c$ per unit of time.
    The associated energy consumption of the UAV is
    \begin{align}
        E_{\text{se},u}^{(k)} = p_{\text{se},u}^{(k)} \mathbf{T}_{\text{se},u}^{(k)},
    \end{align}
    where $p_{\text{se}, u}^{(k)}$ is UAV $u$'s sensing transmit power at round $k$.

    \item \textit{\textbf{Local Model Training Time:}} The local computation time of UAV $u$ at round $k$ is calculated as
    \begin{align}
        \mathbf{T}_{\text{train},u}^{(k)} = \frac{J C_{u}^{(k)} D_{u}^{(k)}}{f_{u}^{(k)}},
    \end{align}
    where $C_{u}^{(k)}$ represents the number of CPU cycles required for UAV $u$ to process a sample during a local update, while $f_{u}^{(k)}$ denotes the CPU computation capability of UAV $u$ (cycles/s). The related UAV energy consumption is calculated as
    \begin{align}
        E_{\text{train},u}^{(k)} = J \zeta_{u}^{(k)} C_{u}^{(k)} D_{u}^{(k)} \left(f_{u}^{(k)}\right)^2,
    \end{align}
    where $\zeta_{u}^{(k)}$ represents the effective switching capacitance, which is influenced by the UAV's hardware and chip design \cite{6}.
    \item \textit{\textbf{Local Embeddings and Model Uploading Time:}} After local training, UAVs transmit the outputs of their encoders (embeddings), which are essential for decoder training at the server. Furthermore, UAVs must send their parameters for the global model aggregation. \textit{For the uploading of embeddings}, we assume that the amount each UAV needs to upload during every communication round at time slot $t$ remains fixed, denoted as $s_{e}[t]$. The time taken for UAV $u$ to upload local embeddings during round $k$ at time slot $t$ is expressed as
    \begin{align}
        \mathbf{T}_{\text{em-cm},u}^{(k)} = \frac{s_{e}[t]}{R_{u}^{(k)}[t]},
    \end{align}  
    where $R_{u}^{(k)}[t]$ represents the corresponding uplink transmission rate of UAV $u$ to the BS, which is written as \cite{2} \vspace{-12pt}
    \begin{align}
        R_{u}^{(k)}[t] &= B_{u} \log_{2}\left(1+\frac{g_{u, \text{BS}}^{(k)}[t] p_{\text{cm},u}^{(k)}[t]}{\sigma^{2}}\right) \notag \\
        &= B_{u} \log_{2}\left(1+\frac{\gamma_{0} p_{\text{cm},u}^{(k)}[t]}{\left(d_{u,\text{BS}}^{(k)}[t]\right)^2}\right), \label{eqn:Rn}
    \end{align}
    where $\gamma_{0} = \frac{\beta_{0}}{\sigma^{2}}$ is the reference signal-to-noise ratio (SNR). Moreover, $B_{u}$ is the communication bandwidth allocated for UAV $u$, $p_{\text{cm},u}^{(k)}[t]$ denotes the communication transmit power of UAV $u$ at round $k$ at time slot $t$, $\sigma^{2}$ is the additive white Gaussian noise (AWGN) power at the BS, and $d_{u,\text{BS}}^{(k)}[t]$ is the distance (LoS) between UAV $u$ and BS at round $k$ at time slot $t$ which is calculated by
    \begin{equation}
    d_{u,\text{BS}}^{(k)}[t] = \sqrt{(x_{u}^{(k)}[t])^{2}+(y_{u}^{(k)}[t])^{2}+H^{2}}.
    \end{equation}
    The energy consumption of the UAV over T time slots is \vspace{-8pt}
    \begin{align}
        E_{\text{em-cm},u}^{(k)} = \sum_{t=1}^{T}T_{\text{em-cm},u}^{(k)} p_{\text{cm},u}^{(k)}[t].
    \end{align}
    \textit{For model parameter uploading}, we assume that the model size is the same for all UAVs and that is $s_{l}[t]$. the time taken for UAV $u$ to upload the model parameters during round $k$ is expressed as
    \begin{align}
        \mathbf{T}_{\text{ml-cm},u}^{(k)} = \frac{s_{l}[t]}{R_{u}^{(k)}[t]},
    \end{align}  
    where $R_{u}^{(k)}[t]$ is the transmission rate (uplink) of UAV $u$ to BS at round $k$ at time slot $t$, which is written as \eqref{eqn:Rn}. 
    The UAV energy consumption over T time slots is \vspace{-8pt}
    \begin{align}
        E_{\text{ml-cm},u}^{(k)} = \sum_{t=1}^{T}\mathbf{T}_{\text{ml-cm},u}^{(k)} p_{\text{cm},u}^{(k)}[t].
    \end{align}
    Since embedding and model parameter aggregation at the BS is fast and efficient, we neglect the aggregation time in the system latency calculation.
    
    \item \textit{\textbf{Server-side Model Training:}} Once the unified embedding is received as input, the server trains its model to carry out specific tasks, such as classification. Let $J'$ denote the number of iterations for server-side training; the time required for server model training in round $k$ is computed as
    \begin{equation}
        \mathbf{T}_{\text{train},\text{BS}}^{(k)} = \frac{J' C_{\text{BS}}^{(k)} h^{(k)}}{f_{\text{BS}}^{(k)}},
    \end{equation}
    where $C_{\text{BS}}^{(k)}$ is the CPU cycles required per sample during server training, and $f_{\text{BS}}^{(k)}$ is the BS's CPU processing rate (cycles/s). 
    \item \textit{\textbf{Global Model Downloading Time:}} The BS sends the aggregated model parameters to the UAVs based on their modality for the next training round. The downlink rate from BS to UAV $u$ in round $k$ is given by \vspace{-10pt}
    \begin{align}
        R_{\text{BS}}^{(k)} &= B_{\text{BS}}\log_{2}\left(1+\frac{g_{\text{BS},u}^{(k)} p_{\text{cm},\text{BS}}^{(k)}}{\sigma^{2}}\right) \notag \\
        &= B_{\text{BS}} \log_{2}\left(1+\frac{\beta_{0} p_{\text{cm},\text{BS}}^{(k)}}{\sigma^{2} \left(d_{\text{BS},u}^{(k)}\right)^2}\right) \notag \\
        &= B_{\text{BS}} \log_{2}\left(1+\frac{\gamma_{0} p_{\text{cm},\text{BS}}^{(k)}}{\left(d_{u,\text{BS}}^{k}[t]\right)^2}\right),
    \end{align} 
    where $B_{\text{BS}}$ is the BS communication bandwidth, $g_{\text{BS},u}^{(k)}$ is the LoS channel gain between BS and UAV $u$ at round $k$ which is same as $g_{u,\text{BS}}^{(k)}[t]$, $p_{\text{cm},\text{BS}}^{(k)}$ represents the communication transmit power of BS at round $k$, $\sigma^{2}$ is the AWGN power at UAV $u$, $\gamma_{0} = \frac{\beta_{0}}{\sigma^{2}}$ denotes the reference signal-to-noise ratio (SNR), and $d_{\text{BS},u}^{(k)}$ is the LoS distance between BS and UAV $u$ at round $k$ which is same as $d_{u,\text{BS}}^{k}[t]$. Let the global model size be denoted as $s_{g}$, the time taken for UAV $u$ to download the global model during round $k$ is \vspace{-5pt}
    \begin{align} 
        \mathbf{T}_{\text{dl},u}^{(k)} = \frac{s_{g}}{R_{\text{BS}}^{(k)}}.
    \end{align}
\end{enumerate}

We assume that aggregation occurs only after the local models from all participating UAVs have reached the BS. Therefore, the total time for any communication round $k$ is determined by the UAV that takes the longest time, i.e., \vspace{-5pt}
\begin{align}
    \mathbf{T}^{(k)} = \underset{u \in \mathcal{U}}{\max}&\{\mathbf{T}_{\text{se},u}^{(k)}+\mathbf{T}_{\text{train},u}^{(k)}+\mathbf{T}_{\text{em-cm},u}^{(k)} \\
    &+\mathbf{T}_{\text{train},\text{BS}}^{(k)}+\mathbf{T}_{\text{ml-cm},u}^{(k)}+\mathbf{T}_{\text{dl},u}^{(k)}\}.
\end{align}
Therefore, the total FML latency is written as 
\begin{align}
    \mathbf{T}_{\text{total}}^{\text{FL}} = \sum_{k=1}^{K} \mathbf{T}^{(k)}.
\end{align}
As the BS typically has abundant energy resources and UAVs are energy-constrained, we focus on the energy consumption of the UAV, which is expressed as
\begin{align}
    E_{\text{total},u}^{\text{UAV}} = \sum_{k=1}^{K}\left(E_{\text{se},u}^{(k)}+E_{\text{train},u}^{(k)}+E_{\text{em-cm},u}^{(k)}+E_{\text{ml-cm},u}^{(k)}\right). \label{eqn:18}
\end{align}

\color{black}
\subsection{Problem Formulation}
This study focuses on reducing the latency of the UAV-FML system. Building on the above analysis, we define a system latency minimization problem that seeks to jointly optimize UAV trajectory ($x_{u}^{(k)}[t]$, $y_{u}^{(k)}[t]$), sensing scheduling $(x_{c,u}^{(k)})$, and resource allocation for both the UAV ($p_{\text{se},u}^{(k)}$, $p_{\text{cm},u}^{(k)}[t]$, $f_{u}^{(k)}$) and the BS ($p_{\text{cm},\text{BS}}^{(k)}$, $f_{\text{BS}}^{(k)}$).
\\ 
\textit{\textbf{Problem 1:}} \vspace{-10pt}
\begin{subequations} 
\begin{align}
\min_{\begin{aligned}
\end{aligned}} \quad & \mathbf{T}_{\text{total}}^{\text{FL}} \label{eqn:32a}\\ 
\textrm{s.t.} \quad & 0 \le p_{\text{se},u}^{(k)} \le P_{\text{se},u}^{\max}, \quad \forall{k,u} \label{eqn:32b}\\
& 0 \le p_{\text{cm},u}^{(k)}[t] \le P_{\text{cm},u}^{\max}, \quad \forall{k,u,t} \label{eqn:32c}\\
& 0 \le p_{\text{cm},\text{BS}}^{(k)} \le P_{\text{cm},\text{BS}}^{\max}, \quad \forall{k} \label{eqn:32d}\\
& 0 \le f_{u}^{(k)} \le f_{u}^{\max}, \quad \forall{k,u} \label{eqn:32e}\\
& 0 \le f_{\text{BS}}^{(k)} \le f_{\text{BS}}^{\text{max}}, \quad \forall{k} \label{eqn:32f}\\
& x_{c,u}^{(k)} \in \{0,1\}, \quad \forall{c,u,k} \label{eqn:32g0}\\
& \sum_{u=1}^{U} x_{c,u}^{(k)} \le 1, \quad \forall{c,k} \label{eqn:32g}\\
& R_{c,u}^{\text{rad}} \ge x_{c,u}^{(k)} \nu, \quad \forall{k,u} \label{eqn:32h}\\
& E_{\text{UAV},u}^{\text{total}} \le E_{u}^{\max}, \forall{u} \label{eqn:32i}\\
(x_{u}^{(k)}[t+1] - x_{u}^{(k)}[t])^2 &+ (y_{u}^{(k)}[t+1] - y_{u}^{(k)}[t])^2 \nonumber \\ 
& \quad \quad \quad \leq (V_{\max} \delta_{t})^{2}, \forall{k,u,t},\label{eqn:32j}
\end{align} 
\end{subequations} 
with \textbf{control variables:} $\{x_{u}^{(k)}[t]$, $y_{u}^{(k)}[t]$, $x_{c,u}^{(k)}$, $p_{\text{se},u}^{(k)}$, $p_{\text{cm},u}^{(k)}[t]$, $p_{\text{cm},\text{BS}}^{(k)}$, $f_{u}^{(k)}$, $f_{\text{BS}}^{(k)}\}$. Here,  $p_{\text{se},u}^{(k)} = \{p_{\text{se},1}^{(k)},p_{\text{se},2}^{(k)},\dots,p_{\text{se},U}^{(k)}\}$, $p_{\text{cm},u}^{(k)}[t] = \{p_{\text{cm},1}^{(k)}[t],p_{\text{cm},2}^{(k)}[t],\dots,p_{\text{cm},U}^{(k)}[t]\}$, and $f_{u}^{(k)}=\{f_{1}^{(k)},f_{2}^{(k)},\dots,f_{U}^{(k)}\}$. In this context, $P_{\text{se},u}^{\max}$, $P_{\text{cm},u}^{\max}$,  $f_{u}^{\max}$ represent maximum value of sensing transmit power, communication transmit power, and CPU processing rate of UAV $u$, respectively. Similarly, $P_{\text{cm},\text{BS}}^{\max}$ and $f_{\text{BS}}^{\max}$ refers to the highest level of communication transmit power and CPU processing rate of BS, respectively. $E_{u}^{\max}$ sets the value of maximum energy consumed by a UAV. Moreover, the power limits for sensing and communication transmission by UAVs are defined in \eqref{eqn:32b} and \eqref{eqn:32c}, respectively. The transmit power constraint for the BS is represented by  \eqref{eqn:32d}. The CPU processing rate for both UAVs and the BS is constrained in \eqref{eqn:32e} and \eqref{eqn:32f}, respectively. Furthermore, \eqref{eqn:32g0} and \eqref{eqn:32g} set the limits on the sensing scheduling, while \eqref{eqn:32h} ensures that the radar measurement information rate is above a specified threshold. \eqref{eqn:32i} governs the maximum energy consumption of UAV $u$, and \eqref{eqn:32j} restricts the maximum distance UAV $u$ can cover in a single time slot $t$.

\section{Proposed Solution for FML Latency Minimization}

\textbf{Problem 1} is difficult to solve in a straightforward manner and intractable for traditional convex solvers in its current form due to the non-convexity of the objective function and constraints. To address \textbf{Problem 1}, we break the originally formulated problem into three blocks or sub-problems (BCD technique). As a result, the control variables of \textbf{Problem 1} are partitioned in the following way: ($x_{c,u}^{(k)}, p_{\text{se},u}^{(k)}$), $(x_{u}^{(k)}[t], y_{u}^{(k)}[t], p_{\text{cm},u}^{(k)}[t], f_{u}^{(k)})$ and $(p_{\text{cm},\text{BS}}^{(k)}, f_{\text{BS}}^{(k)})$. Finally, we iteratively solve these three sub-problems until convergence.
\\
\textit{\textbf{Sub-problem 1 (Joint UAV sensing scheduling and power control): }} \vspace{-15pt}
\begin{subequations} 
\begin{align}
\min_{\begin{aligned}
    \text{\small $x_{c,u}^{(k)}$, $p_{\text{se},u}^{(k)}$}
\end{aligned}} \quad & \sum_{k=1}^{K}\max_{u \in \mathcal{U}} \left\{x_{c,u}^{(k)} D_{u}^{(k)}/R_{c,u}^{(k),\text{rad}}+\mathbf{T}_{\text{train},u}^{(k)}\right. \nonumber \\[-5pt]
& \left.+\mathbf{T}_{\text{em-cm},u}^{(k)}+\mathbf{T}_{\text{train},\text{BS}}^{(k)}+\mathbf{T}_{\text{ml-cm},u}^{(k)}+\mathbf{T}_{\text{dl},u}^{(k)}\right\} \label{eqn:33a}\\
\textrm{s.t.} \quad & 0 \le p_{\text{se},u}^{(k)} \le P_{\text{se},u}^{\max}, \quad \forall{k,u} \label{eqn:33b}\\
& x_{c,u}^{(k)} \in \{0,1\}, \quad \forall{c,k,u} \label{eqn:33c}\\
& \sum_{u=1}^{U} x_{c,u}^{(k)} \le 1, \quad \forall{c,k,u} \label{eqn:33d}\\
& R_{c,u}^{(k),\text{rad}} \ge x_{c,u}^{(k)} \nu, \quad \forall{k,u} \label{eqn:33e}\\
& E_{\text{UAV},u}^{\text{total}} \le E_{u}^{\max}, \forall{u}. \label{eqn:33f}
\end{align} 
\end{subequations}


\textit{\textbf{Sub-problem 2 (Joint UAV trajectory and resource allocation): }}
\vspace{-5pt}
\begin{subequations} 
\begin{align}
\min_{\begin{aligned}
    & \text{\small $x_{u}^{(k)}[t]$, $y_{u}^{(k)}[t]$,} \\[-0.5ex]
    & \text{\small $p_{\text{cm},u}^{(k)}[t]$, $f_{u}^{(k)}$}
\end{aligned}} \quad & \sum_{k=1}^{K}\max_{u \in \mathcal{U}} \left\{\mathbf{T}_{\text{se},u}^{(k)}+J C_{u}^{(k)} D_{u}^{(k)}/f_{u}^{(k)}\right. \nonumber \\[-15pt]
& \left.+s_{e}[t]/R_{u}^{(k)}[t]+\mathbf{T}_{\text{train},\text{BS}}^{(k)}\right. \nonumber \\
& \left.+s_{l}[t]/R_{u}^{(k)}[t]+\mathbf{T}_{\text{dl},u}^{(k)}\right\} \label{eqn:34a}\\ 
\textrm{s.t.} \quad & 0 \le p_{\text{cm},u}^{(k)}[t] \le P_{\text{cm},u}^{\max}, \forall{k,u,t} \label{eqn:34b}\\
& 0 \le f_{u}^{(k)} \le f_{u}^{\max}, \forall{k,u} \label{eqn:34c}\\
& E_{\text{UAV},u}^{\text{total}} \le E_{u}^{\max}, \forall{u} \label{eqn:34d}\\
(x_{u}^{(k)}[t+1] - x_{u}^{(k)}[t])^2 &+ (y_{u}^{(k)}[t+1] - y_{u}^{(k)}[t])^2 \nonumber\\
&\quad \quad \quad \leq (V_{\max} \delta_{t})^{2}, \forall{u,t}. \label{eqn:34e}
\end{align}
\end{subequations}
\vspace{-18pt}
\\
\textit{\textbf{Sub-problem 3 (BS resource allocation): }}
\vspace{-5pt}
\begin{subequations} 
\begin{align}
\min_{\begin{aligned}
    \text{\small $p_{\text{cm},\text{BS}}^{(k)}$, $f_{\text{BS}}^{(k)}$}
\end{aligned}} \quad & \sum_{k=1}^{K}\max_{u \in \mathcal{U}} \left\{\mathbf{T}_{\text{se},u}^{(k)}+\mathbf{T}_{\text{train},u}^{(k)}+\mathbf{T}_{\text{em-cm},u}^{(k)}\right. \nonumber\\[-5pt] 
& \left.+J' C_{\text{BS}}^{(k)} h^{(k)}/f_{\text{BS}}^{(k)}+T_{\text{ml-cm},u}^{(k)}+s_{g}/R_{\text{BS}}^{(k)}\right\}
\label{eqn:35a}\\ 
\textrm{s.t.} \quad & 0 \le p_{\text{cm},\text{BS}}^{(k)} \le P_{\text{cm},\text{BS}}^{\max}, \forall{k} \label{eqn:35b}\\
& 0 \le f_{\text{BS}}^{(k)} \le f_{\text{BS}}^{\text{max}}, \forall{k}. \label{eqn:35c}
\end{align}
\end{subequations} 
\noindent
\subsection{Sub-Problem 1: Optimizing UAV Sensing Scheduling and Power Control Given UAV Trajectory and Resource Allocation}
\textit{\textbf{Sub-problem 1}} is  non-convex due to the structure of the objective function in \eqref{eqn:33a}, the binary constraint in \eqref{eqn:33c}, and the constraints in \eqref{eqn:33d}, \eqref{eqn:33e}, and \eqref{eqn:33f}. Hence, we now focus on convexifying \eqref{eqn:33a}, \eqref{eqn:33c}, \eqref{eqn:33d}, \eqref{eqn:33e}, and \eqref{eqn:33f}. \textit{\textbf{For the objective function,}} we introduce a slack variable $\psi$ defined as:
\begin{align}
    \frac{x_{c,u}^{(k)} D_{u}^{(k)}}{\frac{\delta}{2 \mu} \log_{2}(1 + \frac{2 \sigma_{\text{pre}}^{2} \hat{\gamma}^{2} B^{3} \mu G_{c,u} p_{\text{se},u}^{(k)}}{\sigma^{2}})} \le \psi, \label{eqn:36}
\end{align}
Next, we introduce another slack variable $\iota$ and re-write \eqref{eqn:36} as:
\begin{subnumcases}{\eqref{eqn:36}\Leftrightarrow}
   & $x_{c,u}^{(k)} D_{u}^{(k)} \leq \psi \iota$, \label{eqn:37a'} \\
   & $\frac{\delta}{2 \mu} \log_{2}(1 + \frac{2 \sigma_{\text{pre}}^{2} \hat{\gamma}^{2} B^{3} \mu G_{c,u} p_{\text{se},u}^{(k)}}{\sigma^{2}}) \geq \iota$. \label{eqn:37b'}
\end{subnumcases}
It is clear that \eqref{eqn:36} can be rewritten as the system of inequalities above. Therefore, we will now proceed to examine the convexity of each inequality in this system.

\noindent
\uline{\textit{\textbf{\eqref{eqn:37a'}}}} We can re-write \eqref{eqn:37a'} equivalently as $\psi \iota \ge x_{c,u}^{(k)} D_{u}^{(k)}$, which is also represented as
\begin{align}
   &\psi \iota \geq x_{c,u}^{(k)} D_{u}^{(k)} \notag \\
   \Leftrightarrow \quad &\frac{1}{4} (\psi+\iota)^2 - \frac{1}{4} (\psi-\iota)^2 \geq x_{c,u}^{(k)} D_{u}^{(k)} \notag \\
   \Leftrightarrow \quad &\frac{1}{4} (\psi+\iota)^2 - x_{c,u}^{(k)} D_{u}^{(k)} \geq  \frac{1}{4} (\psi-\iota)^2. \label{eqn:38}
\end{align}
The right-hand side of \eqref{eqn:38} being already convex, we only need to convexify $(\psi+\iota)^2$. Using the first-order Taylor expansion, we approximate it as
\begin{align}
    (\psi+\iota)^2 \geq (\psi_{i}+\iota_{i})^2 + 2 (\psi_{i}+\iota_{i}) (\psi+\iota-\psi_{i}-\iota_{i}). \label{eqn:39'}
\end{align}
By putting \eqref{eqn:39'} into \eqref{eqn:38}, we arrive at
\begin{align}
    \frac{1}{4} &\left[(\psi_{i}+\iota_{i})^2 + 2 (\psi_{i}+\iota_{i}) (\psi+\iota-\psi_{i}-\iota_{i})\right] \notag \\
    &- x_{c,u}^{(k)} D_{u}^{(k)} \geq  \frac{1}{4} (\psi-\iota)^2. \label{eqn:40}
\end{align}
\uline{\textit{\textbf{\eqref{eqn:37b'}}}} For this, we incorporate the following inequality \cite{4}
\begin{align}
    \ln(1+z) \ge \ln(1+z_{i}) + \frac{z_{i}}{z_{i}+1} - \frac{(z_{i})^{2}}{z_{i}+1} \frac{1}{z}, \label{eqn:41}
\end{align}
to approximate the left-hand side of \eqref{eqn:37b'} as
\begin{align}
    \ln(1+\lambda_{i}) + \frac{\lambda_{i}}{\lambda_{i}+1} - \frac{(\lambda_{i})^{2}}{\lambda_{i}+1} \frac{1}{\lambda} \geq \frac{2 \mu \iota \ln{2}}{\delta}, \label{eqn:42'}
\end{align}
where $\lambda = \frac{2 \sigma_{\text{pre}}^{2} \hat{\gamma}^{2} B^{3} \mu G_{c,u} p_{\text{se},u}^{(k)}}{\sigma^{2}}$ and $\lambda_{i} = \frac{2 \sigma_{\text{pre}}^{2} \hat{\gamma}^{2} B^{3} \mu G_{c,u} \mathop{p_{\text{se},u}^{(k)}}_{i}}{\sigma^{2}}$.

\textit{\textbf{For the constraint \eqref{eqn:33c},}} in order to solve binary variable $x_{c,u}^{(k)}$, we first transform it into the continuous constraint, i.e., 
\begin{align}
    0 \le x_{c,u}^{(k)} \le 1. \label{eqn:33c'} 
\end{align}
\textit{\textbf{For the constraint \eqref{eqn:33e},}} we re-write it equivalently as
\begin{align}
    \frac{\delta}{2 \mu} \log_{2}(1 + \frac{2 \sigma_{\text{pre}}^{2} \hat{\gamma}^{2} B^{3} \mu G_{c,u} p_{\text{se},u}^{(k)}}{\sigma^{2}}) \ge x_{c,u}^{(k)} \nu, 
\end{align}
Similar to \eqref{eqn:42'}, we convexify it as
\begin{align}
    \ln(1+\lambda_{i}) + \frac{\lambda_{i}}{\lambda_{i}+1} - \frac{(\lambda_{i})^{2}}{\lambda_{i}+1} \frac{1}{\lambda} \geq \frac{2 \mu x_{c,u}^{(k)} \nu \ln{2}}{\delta}. \label{eqn:44}
\end{align}
\textit{\textbf{For the constraint \eqref{eqn:33f},}} we equivalently write it as
\begin{align}
    &E_{\text{UAV},u}^{\text{total}} \le E_{u}^{\max}, \forall{k}, \notag \\
    \Leftrightarrow \, &\sum_{k=1}^{K}
    \left(E_{\text{se},u}^{(k)}+E_{\text{train},u}^{(k)}+E_{\text{em-cm},u}^{(k)}+E_{\text{ml-cm},u}^{(k)}\right) \le E_{u}^{\max}, \forall{k}, \notag \\
    \Leftrightarrow \, &\sum_{k=1}^{K} \left(p_{\text{se},u}^{(k)} x_{c,u}^{(k)} D_{u}^{(k)}/R_{c,u}^{(k),\text{rad}} + E_{\text{train},u}^{(k)}+E_{\text{em-cm},u}^{(k)}\right. \notag \\
    &\quad \quad \quad \quad \quad \left. +E_{\text{ml-cm},u}^{(k)} \right) \le E_{u}^{\max}, \forall{k}. \label{eqn:45}
\end{align}
Apparently, \eqref{eqn:45} is non-convx because of the first term. By putting \eqref{eqn:36} into \eqref{eqn:45}, we get
\begin{align}
    &\sum_{k=1}^{K} \left(p_{\text{se},u}^{(k)} \psi + E_{\text{train},u}^{(k)}+E_{\text{em-cm},u}^{(k)}\right. \notag \\
    &\quad \quad \quad \quad \quad \left. +E_{\text{ml-cm},u}^{(k)} \right) \le E_{u}^{\max}, \forall{k}. \label{eqn:46}
\end{align}
For $p_{\text{se},u}^{(k)} > 0$ and $\psi > 0$, we utilize SCA to approximate $p_{\text{se},u}^{(k)} \psi$ as
\begin{align}
    p_{\text{se},u}^{(k)} \psi \leq \frac{1}{2} \frac{\psi_{i}}{\mathop{p_{\text{se},u}^{(k)}}_{i}} {p_{\text{se},u}^{(k)}}^2 + \frac{1}{2} \frac{\mathop{p_{\text{se},u}^{(k)}}_{i}}{\psi_{i}} \psi^2,
\end{align}
where $\mathop{p_{\text{se},u}^{(k)}}_{i}$ and $\psi_{i}$ represent the feasible values of $p_{\text{se},u}^{(k)}$ and $\psi$ at iteration $i$. Therefore, \eqref{eqn:46} (the equivalent of \eqref{eqn:33f}) is transformed into a convex form as
\begin{align}
    &\sum_{k=1}^{K} \left(\frac{1}{2} \frac{\psi_{i}}{\mathop{p_{\text{se},u}^{(k)}}_{i}} {p_{\text{se},u}^{(k)}}^2 + \frac{1}{2} \frac{\mathop{p_{\text{se},u}^{(k)}}_{i}}{\psi_{i}} \psi^2 + E_{\text{train},u}^{(k)}+E_{\text{em-cm},u}^{(k)}\right. \notag \\
    &\quad \quad \quad \quad \quad \left. +E_{\text{ml-cm},u}^{(k)} \right) \le E_{u}^{\max}, \forall{k}. \label{eqn:48}
\end{align}
After convexifying, \textit{\textbf{sub-problem 1}} is equivalently expressed as follows.\\
\noindent
\textit{\textbf{Sub-problem 1 (Equivalent):}}
\begin{subequations} 
\begin{align}
\min_{\begin{aligned}
    \text{\small $x_{c,u}^{(k)}$, $p_{\text{se},u}^{(k)}$}
\end{aligned}} \quad & \sum_{k=1}^{K}\max_{u \in \mathcal{U}} \left\{\psi+\mathbf{T}_{\text{train},u}^{(k)}\right. \nonumber \\[-5pt]
& \left.+\mathbf{T}_{\text{em-cm},u}^{(k)}+\mathbf{T}_{\text{train},\text{BS}}^{(k)}+\mathbf{T}_{\text{ml-cm},u}^{(k)}+\mathbf{T}_{\text{dl},u}^{(k)}\right\} \label{eqn:49a}\\
\textrm{s.t.} \quad & \eqref{eqn:40}, \eqref{eqn:42'}, \eqref{eqn:44}, \eqref{eqn:48}, \eqref{eqn:33c'}, \eqref{eqn:33b}, \eqref{eqn:33d}. \label{eqn:49b}
\end{align} 
\end{subequations}
Here, the sensing scheduling solution $x_{c,u}^{(k)}$ from \textbf{sub-problem 1} is continuous, and we approximate it to binary form before using it in the next sub-problems \cite{10}. If $x_{c,u}^{(k)} \geq 0.5$, it is rounded to 1; otherwise, it is set to 0.
\textit{\textbf{Regarding complexity}}, \textit{\textbf{Sub-problem 1 (Equivalent)}} involves $(2U)$ scalar decision variables and $(7U)$ convex constraints. According to the standard complexity results for interior-point methods \cite{ben2001lectures}, each iteration requires on the order of $\mathcal{O}\!\left((2U)^{2}\sqrt{7U}\right)$ operations.

\subsection{Sub-Problem 2: Optimizing  UAV Trajectory and Resource Allocation Given Sensing Scheduling and BS Resource Allocation} \textit{\textbf{Regarding the objective function,}} it is clear that the third and the fifth terms contribute to its non-convexity. To address this, we introduce a slack variable $g$ defined as:
\begin{align}
    \frac{s_{e}[t]+s_{l}[t]}{B_{u} \log_{2}\left(1+\frac{\gamma_{0} p_{\text{cm},u}^{(k)}[t]}{\left(d_{u,\text{BS}}^{(k)}[t]\right)^2}\right)} \leq g. \label{eqn:50}
\end{align}
Next, we define 3 additional slack variables $z$, $\gamma$, and $\alpha$, and re-write \eqref{eqn:50} in the following way:
\begin{subnumcases}{\eqref{eqn:50}\Leftrightarrow}
   & $s_{e}[t]+s_{l}[t] \leq g z$, \label{eqn:51a} \\
   & $B_{u} \log_{2}(1+\gamma) \geq z$, \label{eqn:51b} \\
   & $\frac{p_{\text{cm},u}^{(k)}[t]}{\alpha} \geq \gamma$, \label{eqn:51c} \\
   & $(x_{u}^{(k)}[t])^{2}+(y_{u}^{(k)}[t])^{2}+H^{2} \leq \alpha$. \label{eqn:51d}
\end{subnumcases}

\noindent
\eqref{eqn:50} is rewritten as this system of equations. We will now proceed to examine the convexity of each inequality in this system.\\
\noindent
\uline{\textit{\textbf{\eqref{eqn:51a}:}}} \eqref{eqn:51a} is equivalently re-written as $g z \geq s_{e}[t]+s_{l}[t]$ and further expressed as
\begin{align}
   &g z \geq s_{e}[t]+s_{l}[t] \notag \\
   \Leftrightarrow \quad &\frac{1}{4} (g+z)^2 - \frac{1}{4} (g-z)^2 \geq s_{e}[t]+s_{l}[t] \notag \\
   \Leftrightarrow \quad &\frac{1}{4} (g+z)^2 - s_{e}[t]-s_{l}[t] \geq  \frac{1}{4} (g-z)^2. \label{eqn:52}
\end{align}
The right-hand side of \eqref{eqn:52} is convex. As a result, we only approximate $(g+z)^2$. Employing the first-order Taylor expansion, we have
\begin{align}
    (g+z)^2 \geq (g_{i}+z_{i})^2 + 2 (g_{i}+z_{i}) (g+z-g_{i}-z_{i}). \label{eqn:53}
\end{align}
Replacing \eqref{eqn:53} into \eqref{eqn:52}, we now get
\begin{align}
    \frac{1}{4} &\left[(g_{i}+z_{i})^2 + 2 (g_{i}+z_{i}) (g+z-g_{i}-z_{i})\right] \notag \\
    & - s_{e}[t] - s_{l}[t] \geq  \frac{1}{4} (g-z)^2. \label{eqn:54}
\end{align}
\\
\uline{\textit{\textbf{\eqref{eqn:51b}:}}} By using the inequality in \eqref{eqn:41}, we approximate the left-hand side of \eqref{eqn:51b} as
\begin{align}
    \ln(1+\gamma_{i}) + \frac{\gamma_{i}}{\gamma_{i}+1} - \frac{(\gamma_{i})^{2}}{\gamma_{i}+1} \frac{1}{\gamma} \geq \frac{z \ln{2}}{B_{u}}. \label{eqn:55}
\end{align}
\\
\noindent
\uline{\textit{\textbf{\eqref{eqn:51c}:}}} We write \eqref{eqn:51c} in another way as
\begin{align}
    p_{\text{cm},u}^{(k)}[t] \geq \alpha \gamma. \label{eqn:56}
\end{align}
For $\alpha > 0$ and $\gamma > 0$, we use SCA to approximate right-hand side of \eqref{eqn:56} as
\begin{align}
    \alpha \gamma \leq \frac{1}{2} \frac{\gamma_{i}}{\alpha_{i}} \alpha^2 + \frac{1}{2} \frac{\alpha_{i}}{\gamma_{i}} \gamma^2, \label{eqn:57}
\end{align}
where $\alpha_{i}$ and $\gamma_{i}$ are the feasible values of $\alpha$ and $\gamma$ at iteration $i$. Thus, \eqref{eqn:56} (equivalent of \eqref{eqn:51c}) is turned into a convex form as
\begin{align}
    p_{\text{cm},u}^{(k)}[t] \geq \frac{1}{2} \frac{\gamma_{i}}{\alpha_{i}} \alpha^2 + \frac{1}{2} \frac{\alpha_{i}}{\gamma_{i}} \gamma^2. \label{eqn:58}
\end{align}
\\
\noindent
\uline{\textit{\textbf{\eqref{eqn:51d}:}}} \eqref{eqn:51d} is now convex and can be directly solved by convex solvers, such as CVX.

\textit{\textbf{For the constraint \eqref{eqn:34d},}} we re-write it as
\begin{align}
    &E_{\text{UAV},u}^{\text{total}} \le E_{u}^{\max}, \forall{n}, \notag \\
    \Leftrightarrow \, &\sum_{k=1}^{K}
    \left(E_{\text{se},u}^{(k)}+E_{\text{train},u}^{(k)}+E_{\text{em-cm},u}^{(k)}+E_{\text{ml-cm},u}^{(k)}\right) \le E_{u}^{\max}, \forall{n}, \notag \\
    \Leftrightarrow \, &\sum_{k=1}^{K} \Bigg(E_{\text{se},u}^{(k)} + J \zeta_{u}^{(k)} C_{u}^{(k)} D_{u}^{(k)} \left(f_{u}^{(k)}\right)^2 \notag \\
    & + \sum_{t=1}^{T} \Big(s_{e}[t]+s_{l}[t]\Big) p_{\text{cm},u}^{(k)}[t]/R_{u}^{(k)}[t] \Bigg) \le E_{u}^{\max}, \forall{n}. \label{eqn:59}
\end{align}
From \eqref{eqn:59}, the first and the second terms of the left-hand side are already in convex forms. However, the third term is non-convex. By Putting \eqref{eqn:50} into \eqref{eqn:59}, we reach at
\begin{align}
    \sum_{k=1}^{K} &\Bigg(p_{\text{se},u}^{(k)} \mathbf{T}_{\text{se},u}^{(k)} + J \zeta_{u}^{(k)} C_{u}^{(k)} D_{u}^{(k)} \left(f_{u}^{(k)}\right)^2 \notag \\
    &\quad  +\sum_{t=1}^{T} g p_{\text{cm},u}^{(k)}[t]\Bigg) \le E_{u}^{\max}, \forall{n}. \label{eqn:60}
\end{align}

Now, for $g > 0$ and $p_{\text{cm},u}^{(k)}[t] > 0$, we leverage SCA to transform $g p_{\text{cm},u}^{(k)}[t]$ into a convex form as
\begin{align}
    g p_{\text{cm},u}^{(k)}[t] \leq \frac{1}{2} \frac{\mathop{p_{\text{cm},u}^{(k)}[t]}_{i}}{g_{i}} g^2 + \frac{1}{2} \frac{g_{i}}{\mathop{p_{\text{cm},u}^{(k)}[t]}_{i}} {p_{\text{cm},u}^{(k)}[t]}^2, \label{eqn:61}
\end{align}
where $\mathop{p_{\text{cm},u}^{(k)}[t]}_{i}$ and $g_{i}$ denotes the feasible values of $p_{\text{cm},u}^{(k)}[t]$ and $g$ at iteration $i$, respectively. Thus, \eqref{eqn:59} (equivalent of \eqref{eqn:34d}) can be convexified as
\begin{align}
    &\sum_{k=1}^{K} \Bigg(E_{\text{se},u}^{(k)} + J \zeta_{u}^{(k)} C_{u}^{(k)} D_{u}^{(k)} \left(f_{u}^{(k)}\right)^2 \notag \\
    &  \sum_{t=1}^{T} \left(\frac{1}{2} \frac{\mathop{p_{\text{cm},u}^{(k)}[t]}_{i}}{g_{i}} g^2 + \frac{1}{2} \frac{g_{i}}{\mathop{p_{\text{cm},u}^{(k)}[t]}_{i}} {p_{\text{cm},u}^{(k)}[t]}^2\right)\Bigg) \le E_{u}^{\max}, \forall{n}. \label{eqn:62}
\end{align}
After going through the convexifying process, \textit{\textbf{sub-problem 2}} is written as follows.\\
\noindent
\textit{\textbf{Sub-problem 2 (Equivalent):}}
\begin{subequations} 
\begin{align}
\min_{\begin{aligned}
    & \text{\small $x_{u}^{(k)}[t]$, $y_{u}^{(k)}[t]$,} \\[-0.5ex]
    & \text{\small $p_{\text{cm},u}^{(k)}[t]$, $f_{u}^{(k)}$}
\end{aligned}} \quad & \sum_{k=1}^{K}\max_{u \in \mathcal{U}} \left\{\mathbf{T}_{\text{se},u}^{(k)}+J C_{u}^{(k)} D_{u}^{(k)}/f_{u}^{(k)}\right. \nonumber \\[-15pt]
& \left.+(s_{e}[t]+s_{l}[t])/R_{u}^{(k)}[t]+\mathbf{T}_{\text{train},\text{BS}}^{(k)}\right. \nonumber \\
& \left.+\mathbf{T}_{\text{dl},u}^{(k)}\right\} \label{eqn:63a}\\ 
\textrm{s.t.} \quad & \eqref{eqn:54}, \eqref{eqn:55}, \eqref{eqn:58}, \eqref{eqn:51d}, \eqref{eqn:62}, \nonumber \\
&\eqref{eqn:34b}-\eqref{eqn:34c}, \eqref{eqn:34e}. \label{eqn:63b}
\end{align}
\end{subequations}
\textit{\textbf{Regarding complexity}}, \textit{\textbf{Sub-problem 2 (Equivalent)}} is characterized by $(4U)$ scalar decision variables with $(8U)$ linear or quadratic constraints. Following the interior-point method analysis in \cite{ben2001lectures}, the resulting per-iteration computational complexity is expressed as $\mathcal{O}\!\left((4U)^{2}\sqrt{8U}\right)$.

\subsection{Sub-Problem 3: Optimizing BS Resource Allocation Given UAV Sensing, Trajectory and Resource Allocation Design} 
For given UAV sensing scheduling and resource allocation, we now focus on optimizing BS resource allocation. 
Similar to the other two sub-problems, \textit{\textbf{For the objective function,}} we define a slack variable $\Theta$ as: 
\begin{align}
    \frac{s_{g}}{B_{\text{BS}} \log_{2}\left(1+\frac{\gamma_{0} p_{\text{cm},\text{BS}}^{(k)}}{\left(d_{\text{BS},u}^{(k)}\right)^2}\right)} \leq \Theta \label{eqn:64}
\end{align}
As a result, \textit{\textbf{sub-problem 3}} is re-written as
\begin{subequations} 
\begin{align}
\min_{\begin{aligned}
    \text{\small $p_{\text{cm},\text{BS}}^{(k)}$, $f_{\text{BS}}^{(k)}$}
\end{aligned}} \quad & \sum_{k=1}^{K}\max_{u \in \mathcal{U}} \left\{\mathbf{T}_{\text{se},u}^{(k)}+\mathbf{T}_{\text{train},u}^{(k)}+\mathbf{T}_{\text{em-cm},u}^{(k)}\right. \nonumber\\[-5pt] 
& \left.+J' C_{\text{BS}}^{(k)} h^{(k)}/f_{\text{BS}}^{(k)}+\mathbf{T}_{\text{ml-cm},u}^{(k)}+\Theta\right\} \label{eqn:65a}\\ 
\textrm{s.t.} \quad & 0 \le p_{\text{cm},\text{BS}}^{(k)} \le P_{\text{cm},\text{BS}}^{\max}, \forall{k} \label{eqn:65b}\\
& 0 \le f_{\text{BS}}^{(k)} \le f_{\text{BS}}^{\text{max}}, \forall{k} \label{eqn:65c}\\
& \frac{s_{g}}{B_{\text{BS}} \Theta} \leq \log_{2}\left(1+\frac{\gamma_{0} p_{\text{cm},\text{BS}}^{(k)}}{\left(d_{\text{BS},u}^{(k)}\right)^2}\right), \forall{k}. \label{eqn:65d}
\end{align}
\end{subequations}
Here, \eqref{eqn:65a}, and \eqref{eqn:65b}-\eqref{eqn:65c} are already in convex form. As a result, we move forward to convexify \eqref{eqn:65d}.
\\
\uline{\textit{\textbf{\eqref{eqn:65d}:}}} We use the inequality in \eqref{eqn:41} to convexify right-hand side of \eqref{eqn:65d} as
\begin{align}
    \frac{s_{g} \ln2}{B_{\text{BS}} \Theta} \leq \ln\left(1+\xi_{i}\right)+\frac{\xi_{i}}{\xi_{i}+1} - \frac{{\xi_{i}}^2}{\xi_{i}+1}.\frac{1}{\xi} \label{eqn:66}
\end{align}
where $\xi = \frac{\gamma_{0} \mathop p_{\text{cm},\text{BS}}^{(k)}}{\left(d_{\text{BS},u}^{(k)}\right)^2}$ and $\xi_{i} = \frac{\gamma_{0} \mathop{p_{\text{cm},\text{BS}}^{(k)}}_{i}}{\left(d_{\text{BS},u}^{(k)}\right)^2}$.
After being convexified, \textit{\textbf{sub-problem 3}} is stated as follows.\\
\noindent
\textit{\textbf{Sub-problem 3 (Equivalent):}}
\begin{subequations} 
\begin{align}
\min_{\begin{aligned}
    \text{\small $p_{\text{cm},\text{BS}}^{(k)}$, $f_{\text{BS}}^{(k)}$}
\end{aligned}} \quad & \sum_{k=1}^{K}\max_{u \in \mathcal{U}} \left\{\mathbf{T}_{\text{se},u}^{(k)}+\mathbf{T}_{\text{train},u}^{(k)}+\mathbf{T}_{\text{em-cm},u}^{(k)}\right. \nonumber\\[-5pt] 
& \left.+J' C_{\text{BS}}^{(k)} h^{(k)}/f_{\text{BS}}^{(k)}+\mathbf{T}_{\text{ml-cm},u}^{(k)}+\Theta\right\} \label{eqn:67a}\\ 
\textrm{s.t.} \quad & \eqref{eqn:66}, \eqref{eqn:65b}-\eqref{eqn:65c}. \label{eqn:67b}
\end{align}
\end{subequations}
\textit{\textbf{Regarding complexity}}, \textit{\textbf{Sub-problem 3 (Equivalent)}} includes $(2K)$ scalar decision variables and $(2K+KU)$ linear or quadratic constraints. Based on the interior-point method framework in \cite{ben2001lectures}, the computational complexity is on the order of $\mathcal{O}\big((2K)^2 \sqrt{(2K+KU)}\big)$.

Building on our analysis, we are now prepared to solve the convex equivalents of the three sub-problems to derive the solutions to the original problem (\textbf{Problem 1}) using standard optimization techniques such as CVX. We solve the three blocks \textit{\textbf{(Sub-problem 1 (Equivalent), Sub-problem 2 (Equivalent), Sub-problem 3 (Equivalent))}} together to find the solutions for the original \textbf{Problem 1}, as outlined in Algorithm 1. Each SCA iteration, which sequentially solves the three convex sub-problems, takes approximately 13 seconds using YALMIP with MOSEK. The proposed algorithm converges within 5 SCA iterations in practice, resulting in an overall optimization time of around 65 seconds. Our BCD and SCA based iterative optimization procedure is terminated after a fixed number of iterations, which has been empirically found sufficient to ensure convergence and stable performance.
\makeatletter
\renewcommand{\Statex}{\item[]\hskip\ALG@thistlm}
\makeatother

\begin{algorithm}[t]
  \caption{SCA-based Joint Optimization Algorithm}
\begin{algorithmic}[1]
 \footnotesize
    \Statex \textbf{Input:} 
            \Statex Set the iteration index $i=0$;
            \Statex Define a feasible initial solution
            ${x_{c,u}^{(k)}}_0$, ${x_{u}^{(k)}[t]}_0$, ${y_{u}^{(k)}[t]}_0$, ${p_{\text{se},u}^{(k)}}_0$, ${p_{\text{cm},u}^{(k)}[t]}_0$, ${f_{u}^{(k)}}_0$, ${p_{\text{cm},\text{BS}}^{(k)}}_0$, ${f_{\text{BS}}^{(k)}}_0$ for Problem 1;
    \Statex \textbf{Repeat}
            \Statex Set $i \gets i+1$
            \Statex Solve \textbf{ Sub-problem 1 (Equivalent)} to obtain ${x_{c,u}^{(k)}}_i$, ${p_{\text{se},u}^{(k)}}_i$;
            \Statex Solve \textbf{ Sub-problem 2 (Equivalent)} to obtain ${x_{u}^{(k)}[t]}_i$, ${y_{u}^{(k)}[t]}_i$, ${p_{\text{cm},u}^{(k)}[t]}_i$, ${f_{u}^{(k)}}_i$;
            \Statex Solve \textbf{ Sub-problem 3 (Equivalent)} to obtain ${p_{\text{cm},\text{BS}}^{(k)}}_i$, ${f_{\text{BS}}^{(k)}}_i$;
    \Statex \textbf{Until} convergence. 
    \Statex \textbf{Output:}
            \Statex Optimal \textbf{${x_{c,u}^{(k)}}^*$}, \textbf{${x_{u}^{(k)}[t]}^*$}, \textbf{${y_{u}^{(k)}[t]}^*$}, \textbf{${p_{\text{se},u}^{(k)}}^*$}, \textbf{${p_{\text{cm},u}^{(k)}[t]}^*$}, \textbf{${f_{u}^{(k)}}^*$}, \textbf{${p_{\text{cm},\text{BS}}^{(k)}}^*$}, \textbf{${f_{\text{BS}}^{(k)}}^*$}.
\end{algorithmic} 
\end{algorithm}
\section{Simulation Results and Evaluation}

\begin{figure}[t!]
	\centering
	\begin{subfigure}[t]{0.25\textwidth}
		\centering
		\includegraphics[width=\linewidth]{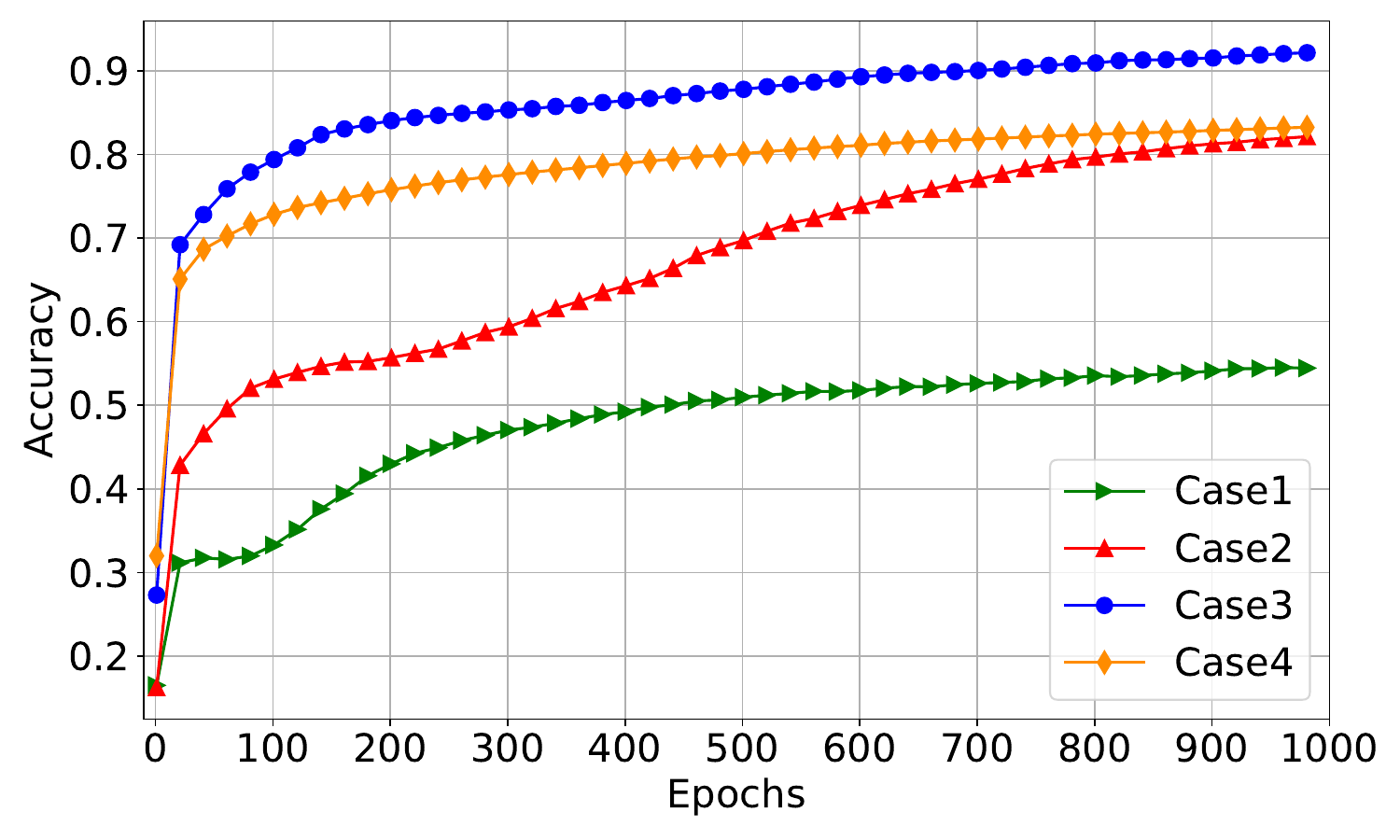} 
        \captionsetup{font=footnotesize}
		\caption{\footnotesize Evaluation of training performance with respect to IID accuracy. }
        \label{iid_acc}
	\end{subfigure}%
	~
	\begin{subfigure}[t]{0.25\textwidth}
		\centering
		\includegraphics[width=\linewidth]{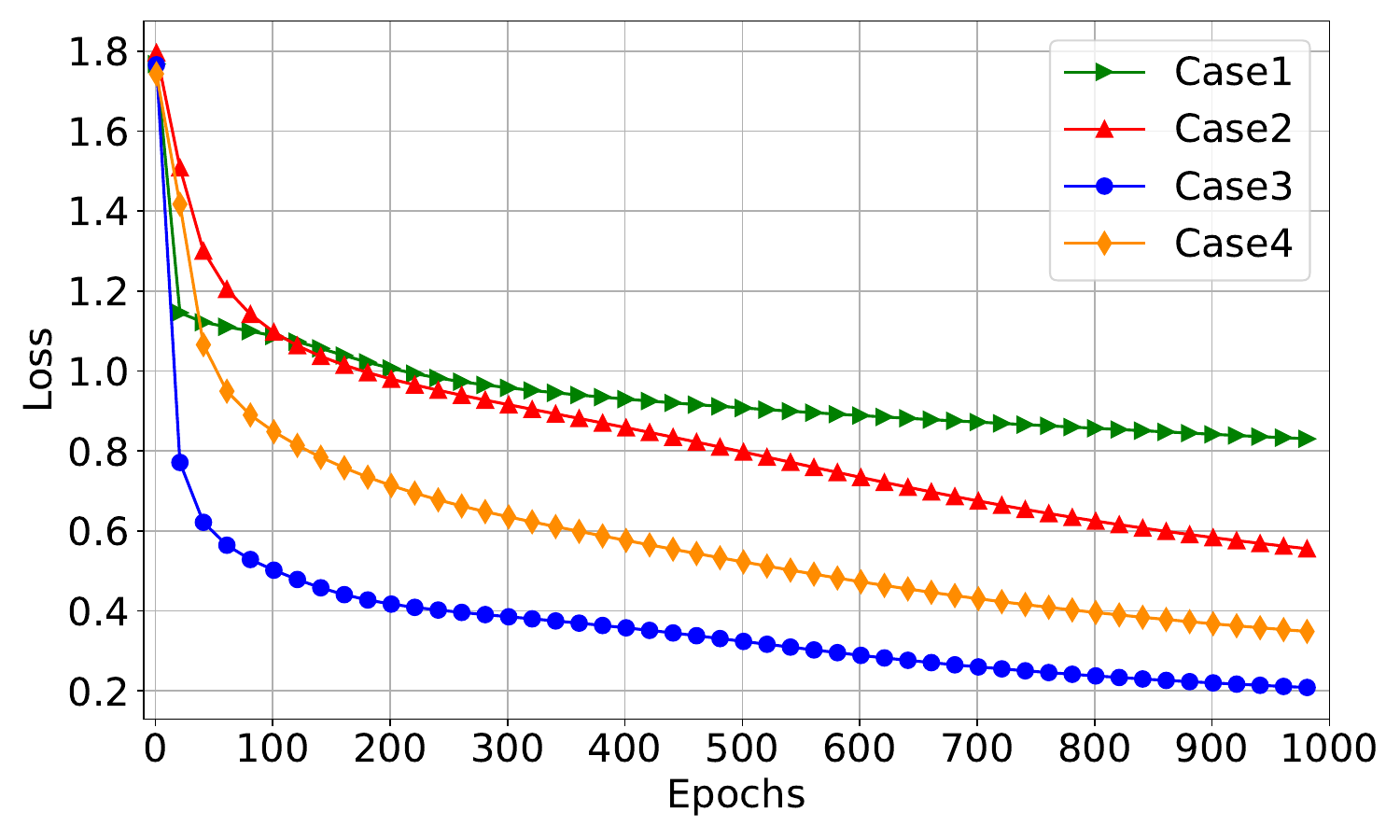} 
        \captionsetup{font=footnotesize}
		\caption{\footnotesize Evaluation of training performance with respect to IID loss.  }
        \label{iid_loss}
	\end{subfigure}
	\\ 
	\begin{subfigure}[t]{0.25\textwidth}
		\centering
		\includegraphics[width=\linewidth]{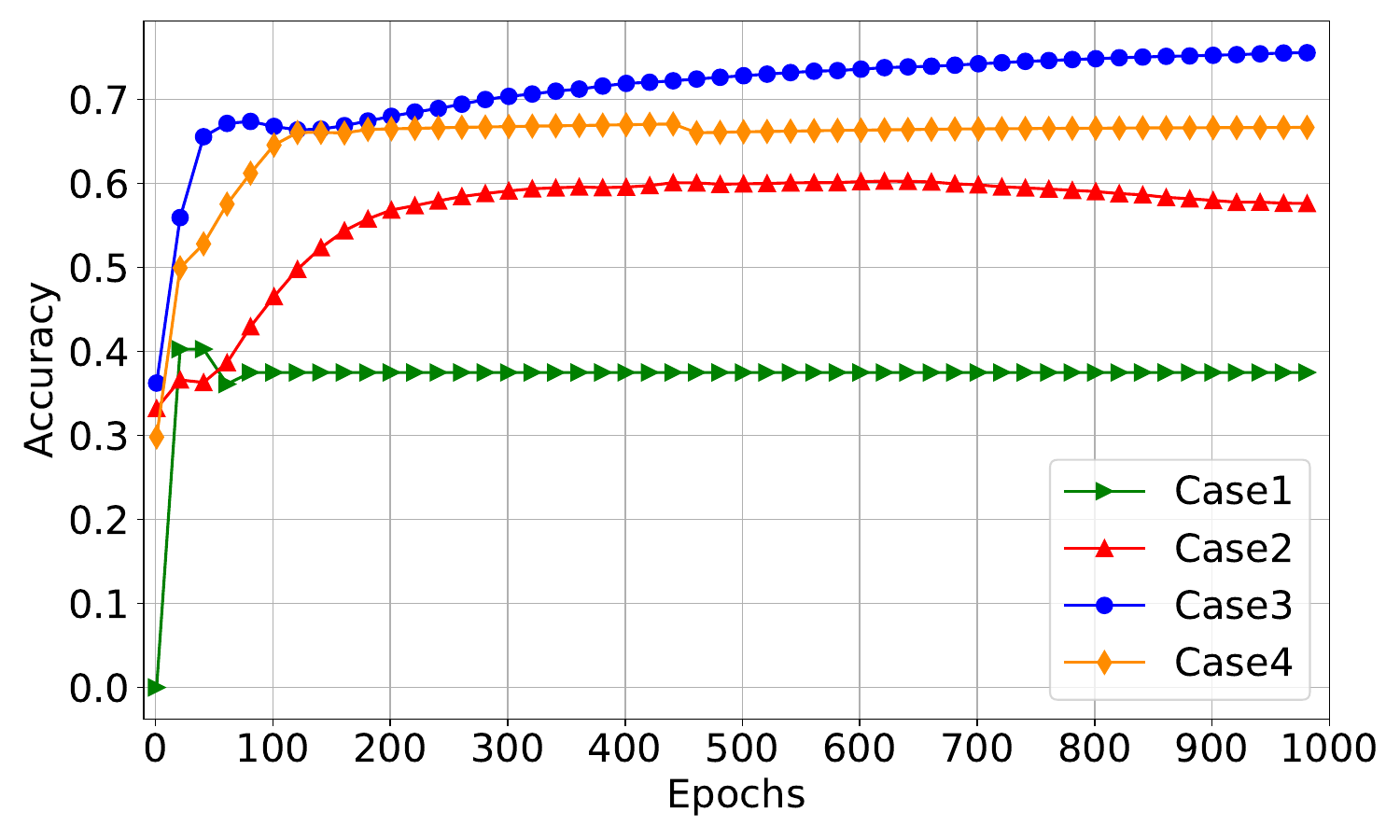} 
        \captionsetup{font=footnotesize}
		\caption{\footnotesize Evaluation of training performance with respect to non-IID accuracy. }
        \label{noniid_acc}
	\end{subfigure}%
	~
	\begin{subfigure}[t]{0.25\textwidth}
		\centering
		\includegraphics[width=\linewidth]{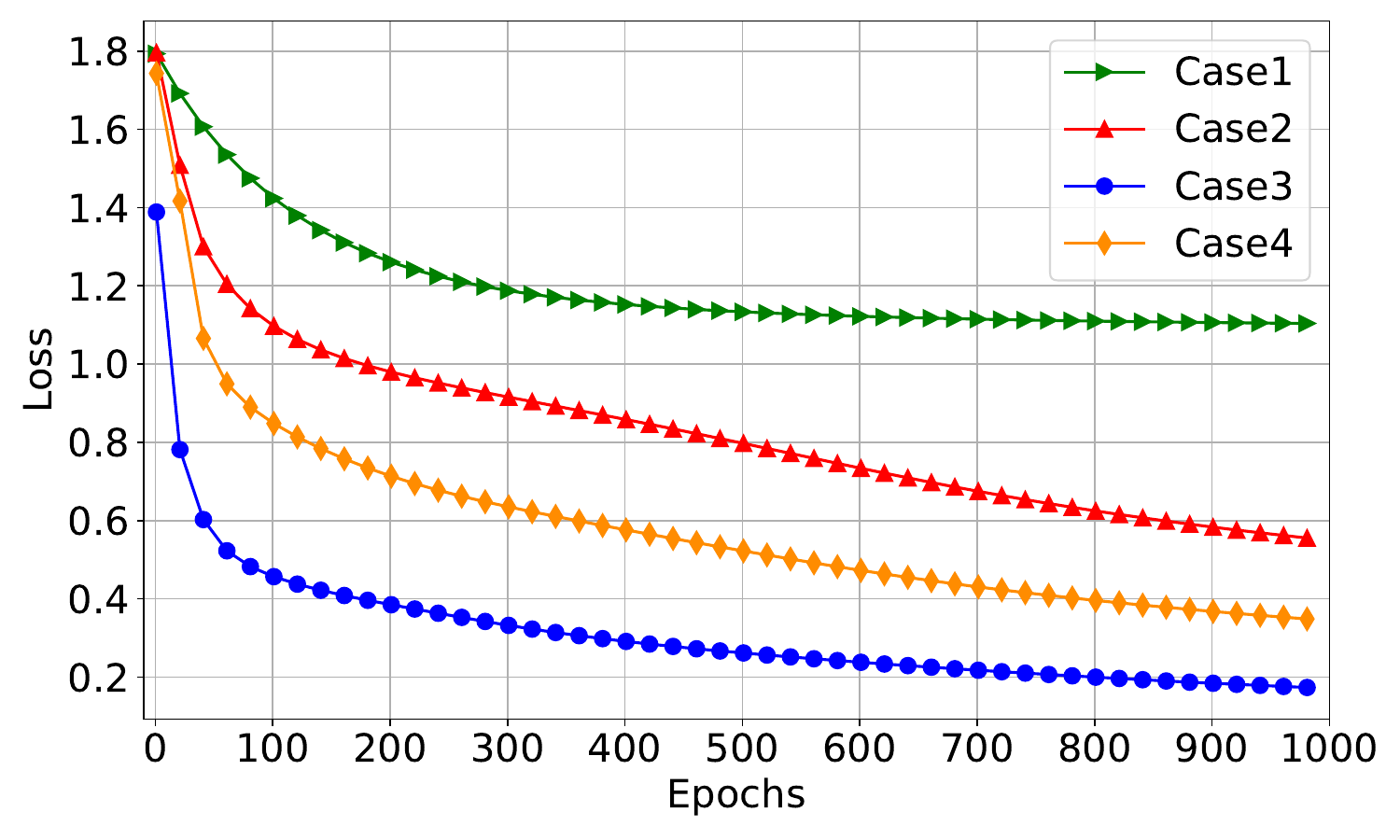} 
        \captionsetup{font=footnotesize}
		\caption{\footnotesize Evaluation of training performance with respect to non-IID loss.}
        \label{noniid_loss}
	\end{subfigure}
    \captionsetup{font=footnotesize}
	\caption{\footnotesize Comparison of various FML approaches on UCI HAR dataset.}
	\label{FL-compare-SVHN_Result}
	\vspace{-0.1in}
\end{figure}

\begin{figure}[ht!]
    \centering
    \includegraphics[width=0.99\linewidth]{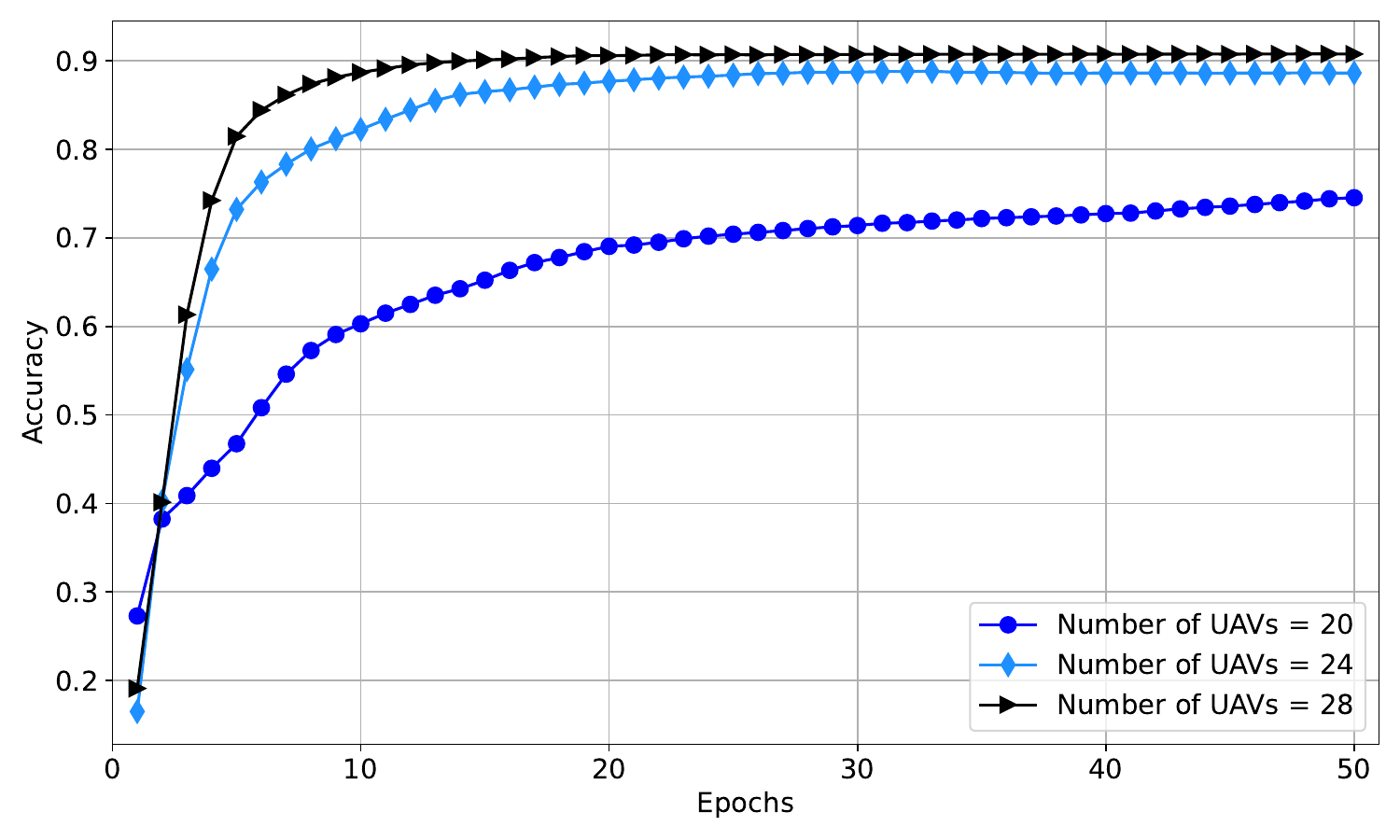}
    \captionsetup{font=footnotesize}
    \caption{\footnotesize Performance of our proposed FML scheme as the number of UAVs increases.}
    \label{fig:scalability}
\end{figure}
\noindent \textbf{1. Parameter Settings: }
\textit{For the FML model training simulation,} we utilize the \textit{UCI human activity recognition (HAR)} dataset \cite{anguita2013public}, which includes two modalities: data collected by gyroscope sensor in the first cluster and data collected by accelerometer sensor in the second cluster. The dataset consists of six activities: walking, going upstairs, going downstairs, standing, sitting, and lying down, collected from 30 subjects using a Samsung Galaxy S II. The data is split into $70\%$ for training and $30\%$ for testing.

In our simulation setup, we involve 20 UAVs, where 10 UAVs are responsible for sensing and training on accelerometer data, while the remaining 10 focus on gyroscope data. The experiments are conducted on a server equipped with an Intel Core™ i7-8700 CPU and 16 GB of RAM, running TensorFlow version 2.12.1. Our proposed algorithm is evaluated under different data configurations in both IID and non-IID scenarios. During our simulation, we measure the round latency using Python's time module. The IID setting creates homogeneous learning conditions across the network. It is because in the IID scenario, the feature distribution is uniform across all the UAVs, which ensures that each UAV receives a balanced mix of data from all activities. On the other hand, the non-IID setting introduces significant variability in feature distribution, with each UAV focusing on specific activities. This tests the model's robustness in learning from diverse and unbalanced data distributions. For the simulations, stochastic gradient descent (SGD) optimizer is used for both encoder and decoder updates. A learning rate of $0.01$ is used. Our proposed FML scheme (\textit{Case3}), incorporating two modality clusters (gyroscope and accelerometer), is compared against two other unimodal baselines: \textit{Case1} (all UAVs using gyroscope data) and \textit{Case2} (all UAVs using accelerometer data). In addition, we include a new baseline, Case4, where all UAVs use accelerometer data with the FedProx algorithm, to further benchmark our approach against a state-of-the-art unimodal FL variant.

\begin{figure}[t!]
    \centering
    \includegraphics[width=0.99\linewidth]{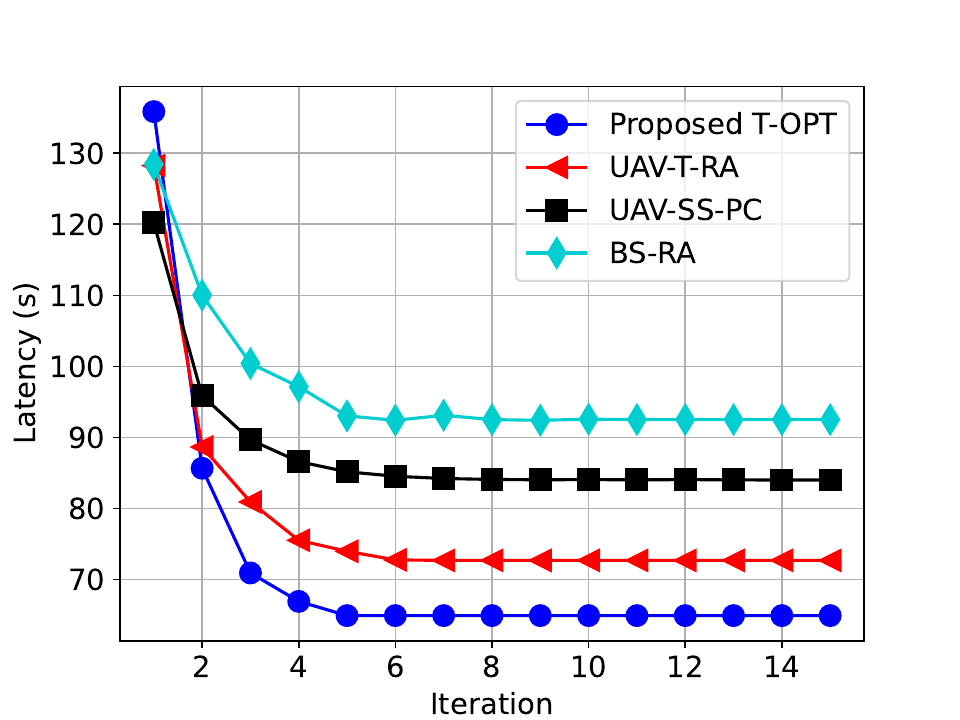}
    \captionsetup{font=footnotesize}
    \caption{\footnotesize Latency comparison.}
    \label{fig:sub11}
\end{figure}

\begin{figure*}[t!]
    \centering
    \footnotesize
    \begin{subfigure}[t]{0.3\linewidth}
        \centering
        \includegraphics[width=\linewidth]{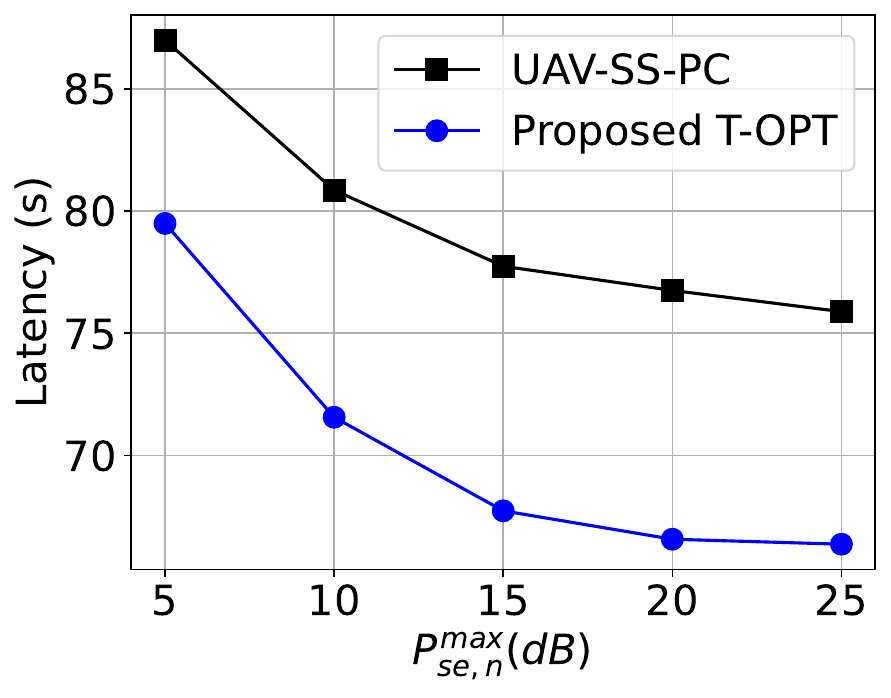}
        \captionsetup{font=footnotesize}
        \caption{\footnotesize System latency against UAVs' maximum sensing power.}
        \label{lat_comp_sub1}
    \end{subfigure}
    ~
    \begin{subfigure}[t]{0.3\linewidth}
        \centering
        \includegraphics[width=\linewidth]{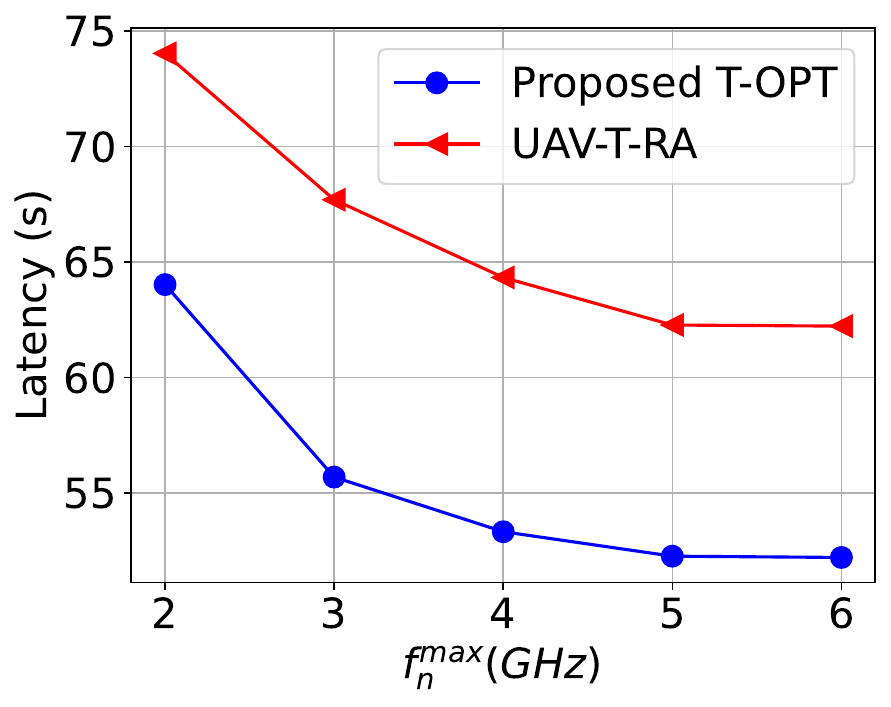}
        \captionsetup{font=footnotesize}
        \caption{\footnotesize System latency against UAVs' maximum frequency.}
        \label{lat_comp_sub2}
    \end{subfigure}
    ~
    \begin{subfigure}[t]{0.3\linewidth}
        \centering
        \includegraphics[width=\linewidth]{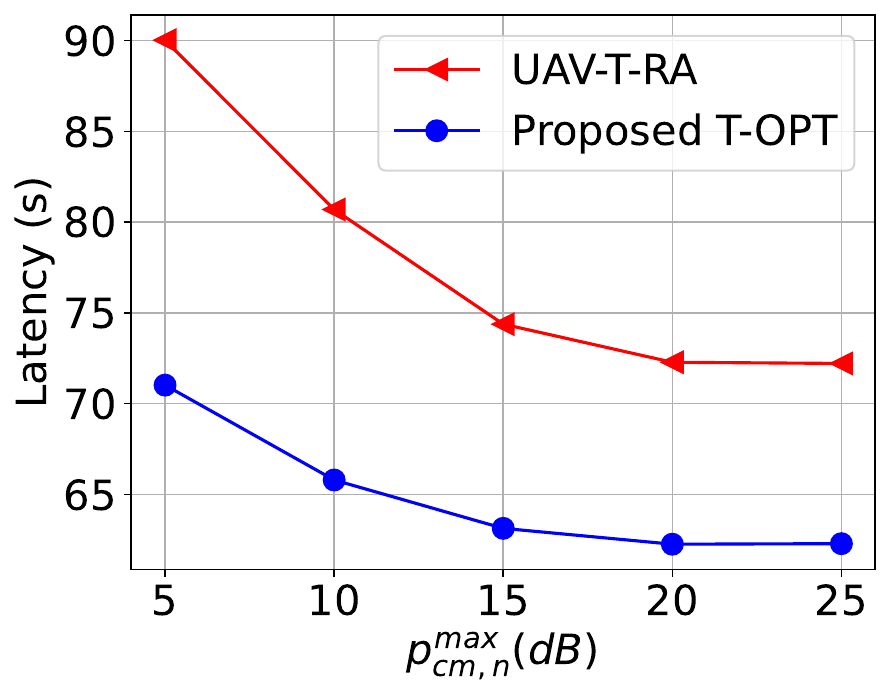}
        \captionsetup{font=footnotesize}
        \caption{\footnotesize System latency against UAVs' maximum communication transmit power.}
        \label{lat_comp_sub2_bar}
    \end{subfigure}
    \captionsetup{font=footnotesize}
    \caption{Comparison of system latency with baseline schemes (UAV-SS-PC and UAV-T-RA).}
\end{figure*}

\textit{For the FML latency simulation}, we verify the performance of our proposed joint optimization of UAV’s sensing scheduling, power control, trajectory, and resource allocation as well as resource allocation of BS (denoted as \textit{T-OPT}). We consider practical scenarios while setting parameter values. All simulations were performed in MATLAB using the YALMIP toolbox and the MOSEK solver. The system bandwidth is set to 20 MHz \cite{6}, with the maximum sensing transmit power of UAV, $P_{\text{se},u}^{\text{max}}$, ranging from 5 to 25 dB. The maximum communication transmit power of UAV, $P_{\text{cm},u}^{\text{max}}$, and of BS, $P_{\text{cm},\text{BS}}^{\text{max}}$,  are configured within the ranges of [5-25] dB and [15-35] dB, respectively. The maximum CPU cycle frequency for UAV is set to $f_{u}^{\text{max}}$ = 2 GHz, while for the BS, $f_{\text{BS}}^{\text{max}}$ = 10 GHz \cite{6}. The noise variance is considered to be $\sigma^2$ = -80 dBm \cite{7}. The effective switched capacitance for UAV's local computation is $\zeta_{u}^{(k)} = 10^{-28}$ \cite{6}. Each UAV performs a total of $J$ = 15 local iterations.

To compare, we evaluate the performance of our proposed joint optimization scheme (\textit{T-OPT}) alongside the following three benchmark schemes that do not employ joint optimization of trajectory and/or resource allocation: (1) sensing scheduling and power control of UAV which is written as \textit{UAV-SS-PC}), (2) trajectory and resource allocation of UAV which is denoted as \textit{UAV-T-RA}), and (3) resource allocation of BS which is represented as \textit{BS-RA}).

\vspace{2mm}
\noindent \textbf{2. FML Model Training Performance:} As mentioned before, we use the \textit{UCI HAR Dataset} \cite{anguita2013public} to simulate FL model training for a task that involves human activity recognition. Fig.~\ref{iid_acc} shows the accuracy against the number of epochs and compares our proposed FML scheme with Case1 and Case2 in terms of IID accuracy. Our scheme achieves 67.99\% higher accuracy than Case1, 11.98\% higher accuracy than Case2, and $10.87\%$ higher accuracy than Case4, showing the benefit of multimodal learning even compared with a FedProx-enhanced unimodal baseline. By incorporating multiple data modalities, FML captures a comprehensive view of underlying phenomena, creating more accurate models. Hence, this multimodal approach outpeforms unimodal schemes, reducing reliance on a single data modality. When UAVs are equipped with multiple sensors, incorporating data from all sensors leads to more precise and dependable predictions. This approach also improves the model's robustness against noise, data gaps, or irregularities. The reason is that the varying features from each data modality work together, strengthening the model's capacity to identify intricate patterns and correlations.

Fig.~\ref{iid_loss} illustrates the loss versus the number of epochs and assesses our proposed FML approach with other schemes in terms of IID loss. The graph clearly shows that the trend in accuracy is consistent in terms of loss, with  our proposed scheme achieving 75.13\% lower loss compared to case1, 62.54\% lower loss compared to case2, and 42.86\% lower loss compared to case4. FML harnesses the synergy of sensors to improve prediction accuracy and minimizes loss by utilizing feature diversity \cite{peng2024fedmm}.



Fig.~\ref{noniid_acc} compares our proposed scheme with case1 and case2 based on non-IID accuracy, showing 101.68\%, 31.61\%, and 13.43\% higher accuracy than case1, case2, and case4, respectively. The non-IID loss performance of our proposed scheme in Fig.~\ref{noniid_loss} outperforms other cases as well.

Fig.~\ref{fig:scalability} illustrates the training accuracy of our proposed scheme for different numbers of participating UAVs. As the number of UAVs increases from 20 to 24, there is a substantial improvement in both convergence speed and final accuracy, highlighting the impact of more diverse and representative data in the federated training process. Increasing from 24 to 28 UAVs yields only a slight improvement, suggesting that adding more UAVs beyond a certain point offers limited additional benefit. However, the model converges faster, indicating that additional UAVs still help accelerate training even if the accuracy gain is limited.

\noindent \textbf{3. FML Model Training Latency Performance: }
Fig.~\ref{fig:sub11} evaluates the performance of our proposed SCA- and BCD-based convex optimization algorithm in terms of the system latency (second) against the number of iterations. Compared to other schemes, our approach achieves much lower system latency for the FL system, showing the merit of our joint optimization design. Numerically, our proposed scheme maintains a steady latency after the fifth iteration, resulting in $29.39\%$, $11.96\%$, and $42.49\%$ lower latency than UAV-SS-PC, UAV-T-RA, and BS-RA schemes, respectively.


We also analyze the latency performance of our proposed method across various scenarios. Fig.~\ref{lat_comp_sub1} shows the latency (second) against the maximum UAV sensing power. This figure compares our proposed joint optimization scheme T-OPT with scheme UAV-SS-PC. As the maximum UAV sensing power increases, latency decreases for both schemes. However, our scheme T-OPT outperforms UAV-SS-PC, achieving $14.33\%$ lower latency. Increased sensing power accelerates the data collection process and improves the quality of the data. This, in turn, leads to quicker data processing and transmission, ultimately lowering the overall system latency.

\color{black}
\begin{figure}[h!]
    \centering
    \footnotesize
    \begin{subfigure}[t]{0.49\linewidth} 
        \centering
        \includegraphics[width=\linewidth]{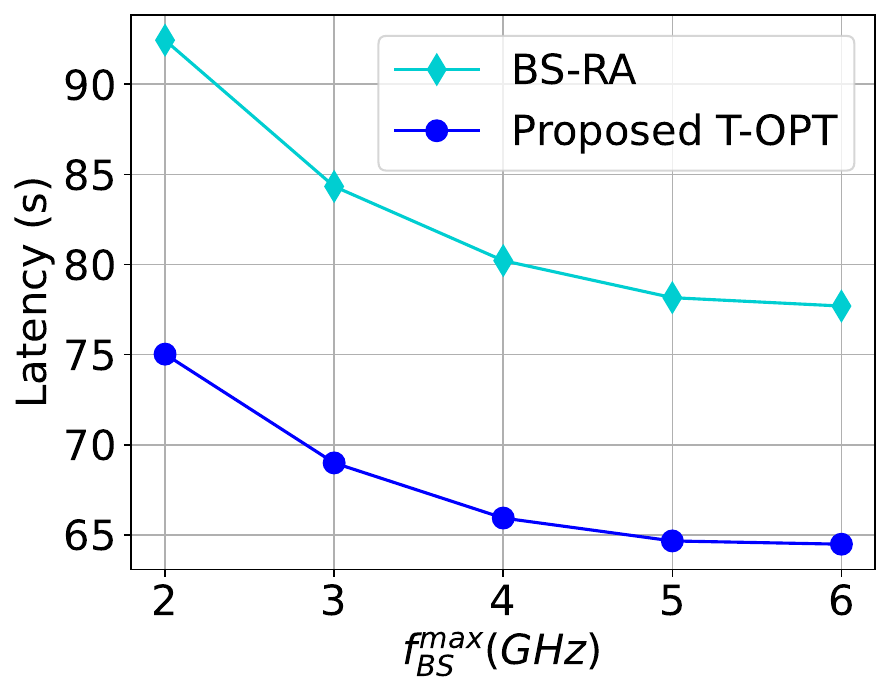}
        \captionsetup{font=footnotesize}
        \caption{\footnotesize System latency against highest value of BS frequency.}
        \label{lat_comp_sub3}
    \end{subfigure}
    \hfill 
    \begin{subfigure}[t]{0.49\linewidth} 
        \centering
        \includegraphics[width=\linewidth]{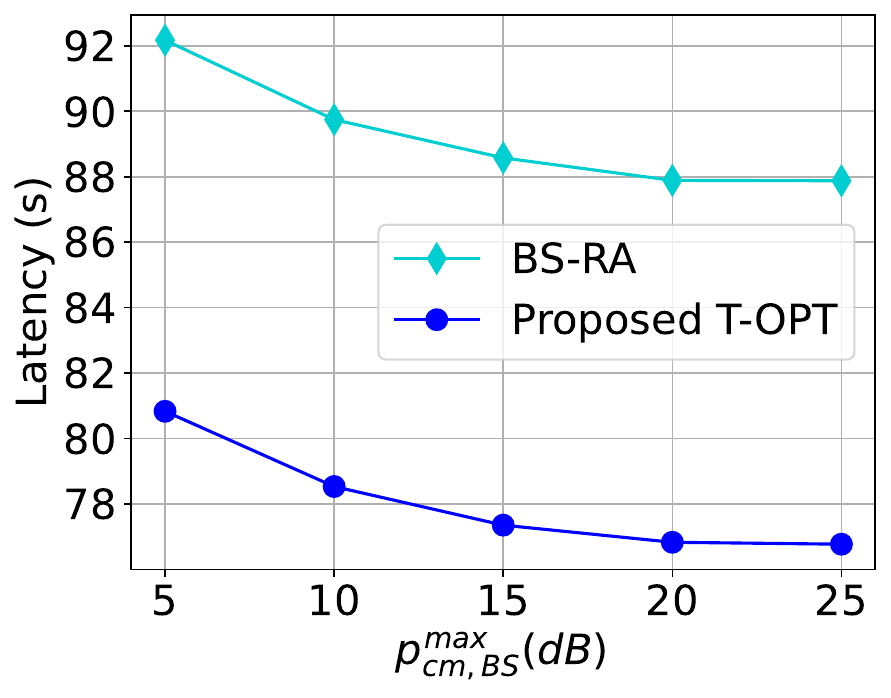}
        \captionsetup{font=footnotesize}
        \caption{\footnotesize System latency versus highest value of BS's transmit power.}
        \label{lat_comp_sub3_bar}
    \end{subfigure}
    \vspace{-3pt}
    \captionsetup{font=footnotesize}
    \caption{\footnotesize Comparison of system latency with scheme BS-RA.}
    \label{lat_comp_sub3_bar_1}
\end{figure}

\begin{figure}[ht!]
    \centering
    \includegraphics[width=0.99\linewidth]{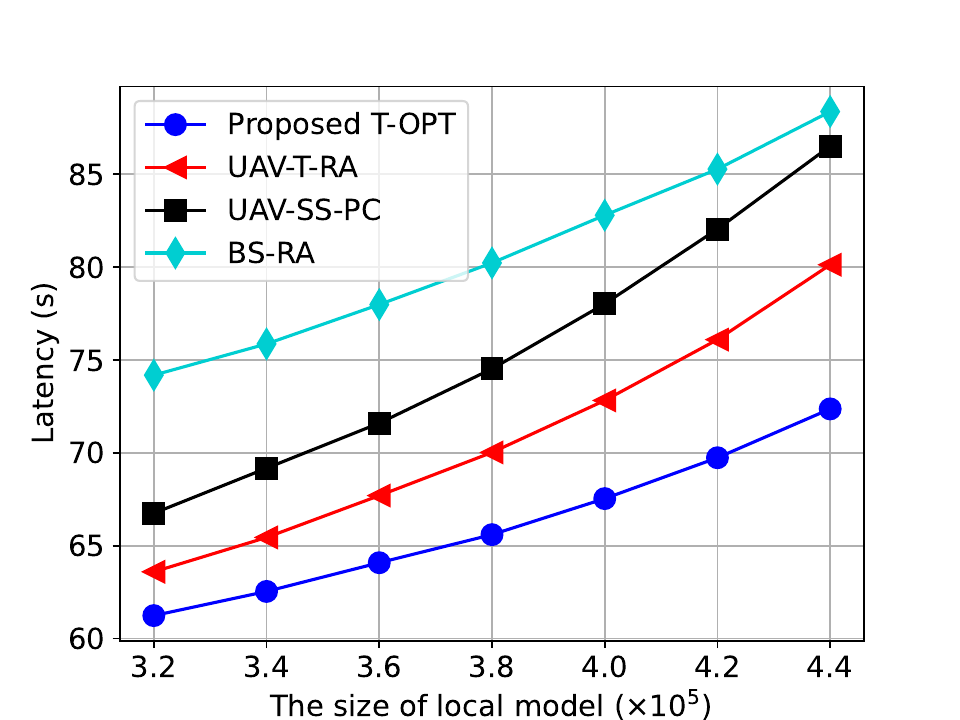}
    \captionsetup{font=footnotesize}
    \caption{\footnotesize Latency comparison of our proposed scheme with other approaches as local model size increases.}
    \label{fig:localmodelsize}
\end{figure}

Fig.~\ref{lat_comp_sub2} shows the relationship between latency (seconds) and the maximum CPU processing rate (in GHz) of UAV, comparing the performance of our proposed algorithm with the UAV-T-RA scheme. Both schemes demonstrate a reduction in latency as UAV frequencies increase, however, our proposed scheme achieves a notable $19.17\%$ decrease in latency compared to scheme UAV-T-RA. reduction in latency compared to UAV-T-RA. Higher frequencies generally lead to faster data transmission rates, contributing to reduced latency by enabling quicker data packet exchanges. The improved performance of our proposed scheme stems from its capacity to adapt dynamically to network conditions, optimizing the parameters for both the UAVs and the BS, which leads to better resource allocation and overall network performance.

Fig.~\ref{lat_comp_sub2_bar} shows the latency (second) versus the maximum communication transmit power of UAV, comparing our proposed scheme with scheme UAV-T-RA. Our proposed scheme achieves $15.94\%$ lower latency compared to scheme UAV-T-RA, although both schemes exhibit reduced latency as the maximum transmit power increases. 


Similarly, Fig.~\ref{lat_comp_sub3} compares the latency of our proposed algorithm T-OPT with BS-RA scheme, presenting latency (second) versus the highest value of BS CPU processing rate (GHz). As the highest value of BS processing rate increases, our proposed scheme outperforms scheme BS-RA, achieving $20.49\%$ reduced latency. Fig.~\ref{lat_comp_sub3_bar} presents latency against highest value of BS communication transmit power, contrasting our proposed algorithm and scheme BS-RA. Our proposed T-OPT method outperforms scheme BS-RA, achieving $14.46\%$ reduced latency.

Fig.~\ref{fig:localmodelsize} presents a comparison of system latency across the schemes UAV-T-RA, UAV-SS-PC, BS-RA, and our proposed T-OPT, as a function of local model size. While latency increases for all schemes with enlarging local model size, the rate of increase is notably lower for our proposed T-OPT scheme. More specifically, T-OPT achieves $10.73\%$, $19.55\%$, and $22.12\%$ lower latency than scheme UAV-T-RA, UAV-SS-PC, and BS-RA, respectively. The figure effectively demonstrates the superior performance of our approach in maintaining lower latency levels under increased computational load.
\begin{figure}[ht!]
    \centering
    \includegraphics[width=0.99\linewidth]{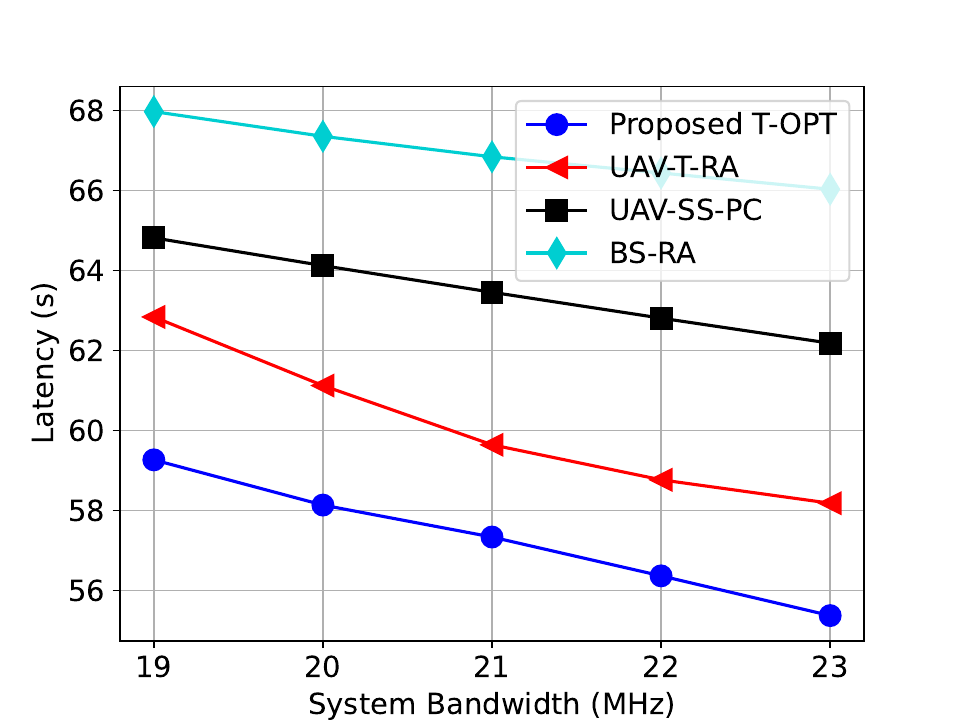}
    \captionsetup{font=footnotesize}
    \caption{\footnotesize Latency comparison of our proposed scheme with other approaches as system bandwidth increases.}
    \label{fig:bandwidth}
\end{figure}
Fig.~\ref{fig:bandwidth} illustrates the impact of increasing system bandwidth on the latency of the aforementioned schemes. From the figure, as the system bandwidth increases, the latency for all schemes decreases, reflecting the enhanced data transmission speeds. However, our proposed T-OPT scheme consistently achieves the lowest latency across all bandwidth scenarios. Specifically, T-OPT shows reductions of $5.07\%$, $12.30\%$, and $19.25\%$ in latency compared to the UAV-T-RA, UAV-SS-PC, and BS-RA schemes, respectively. This figure shows the effectiveness of our joint optimization approach in leveraging increased bandwidth to minimize system latency.

\begin{figure}[ht!]
    \centering
    \includegraphics[width=0.99\linewidth]{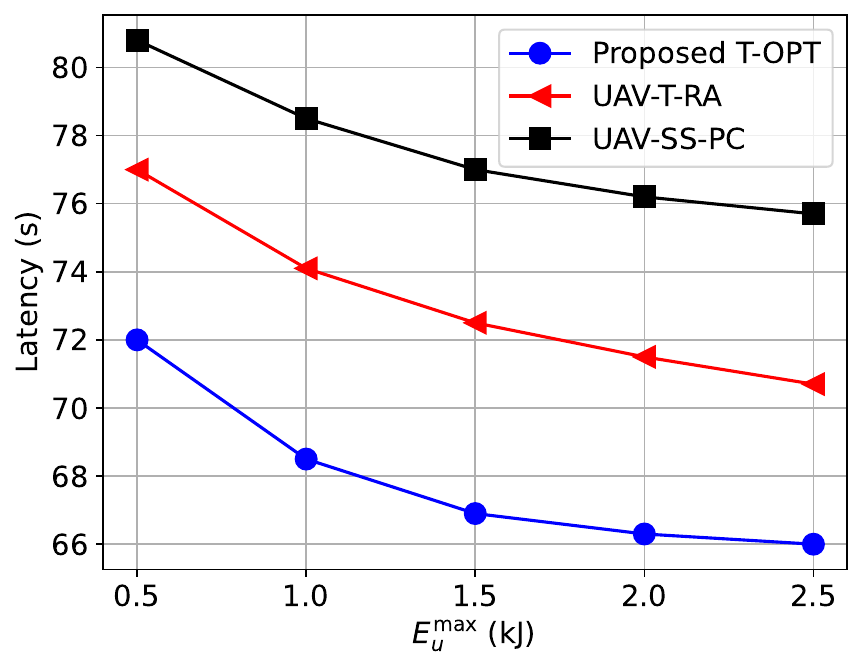}
    \captionsetup{font=footnotesize}
    \caption{\footnotesize Latency comparison of our proposed scheme with other approaches as maximum UAV energy budget increases.}
    \label{fig:energy}
\end{figure}
Fig.~\ref{fig:energy} depicts the impact of increasing the UAV energy budget $E_{u}^{\text{max}}$ on the latency performance of various schemes. As the energy budget increases, all schemes experience reduced latency, owing to the enhanced transmission and computation capabilities of UAVs. Among them, the proposed T-OPT scheme consistently achieves the lowest latency across all energy levels. Compared to UAV-T-RA and UAV-SS-PC, T-OPT achieves latency reductions of up to $6.79\%$ and $12.93\%$, respectively, at the highest energy budget. These results validate the superiority of our joint optimization approach in efficiently utilizing energy resources to minimize system latency.

\begin{figure}[ht!]
    \centering
    \includegraphics[width=0.99\linewidth]{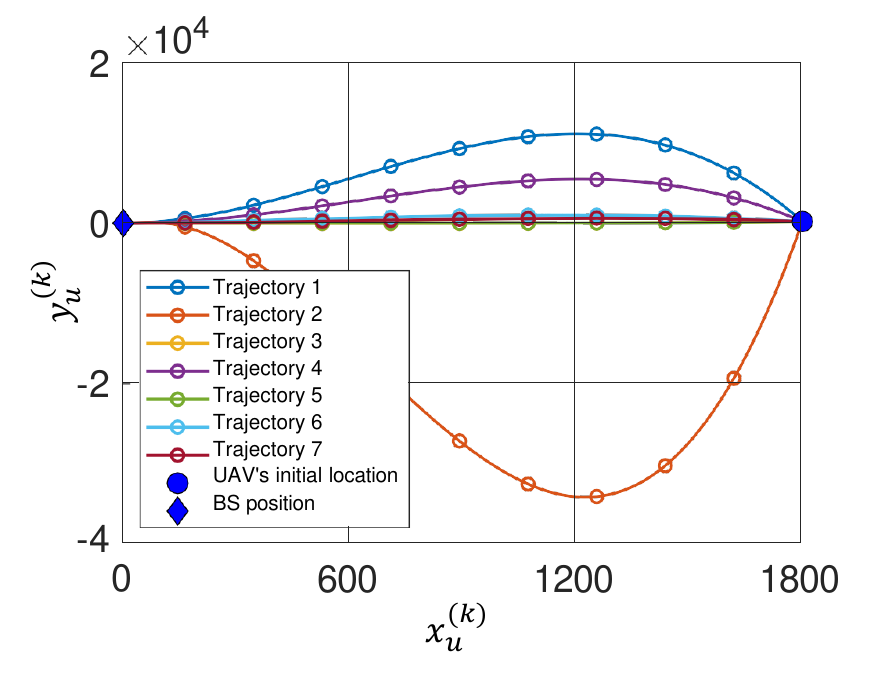}
    \captionsetup{font=footnotesize}
    \caption{\footnotesize UAV flight trajectory optimization through different SCA iterations.}
    \label{fig:traj}
\end{figure}
\color{black}
Fig.~\ref{fig:traj} shows the UAV's horizontal flight trajectories over seven SCA iterations, projected onto the two-dimentional plane defined by $x_{u}^{(k)}$ and $y_{u}^{(k)}$. Starting from the initial location $(1800,0)$ the UAV progressively refines its path toward the BS at $(0,0)$. The trajectories become increasingly straight and efficient, illustrating the SCA algorithm’s convergence toward an latency-minimizing flight path.
 
\section{Conclusion}
This paper has investigated a latency optimization problem of a UAV-enabled FML system, focusing on the joint optimization of UAV sensing scheduling, power control, trajectory, resource allocation, and BS resource allocation. We have provided a comprehensive analysis of the convergence properties of our proposed framework. Our formulated latency minimization problem is extremely challenging to solve because of its non-convex nature. To tackle this, we have proposed an efficient iterative optimization algorithm that combines the BCD and SCA techniques to obtain optimal solutions. Simulation results have shown that our proposed joint optimization method effectively reduces the FML system latency by up to $42.49\%$ compared to baseline methods.

\bibliographystyle{IEEEtran}
\bibliography{Main-Bibliography}
\vspace{-300pt}
\begin{IEEEbiographynophoto}{Shaba Shaon} is currently pursuing her Ph.D. at The University of Alabama in Huntsville, USA. Her research interests include federated learning, quantum computing, and wireless network optimization.
\end{IEEEbiographynophoto}
\vspace{-290pt}
\begin{IEEEbiographynophoto}{Dinh C. Nguyen} is an assistant professor at The University of Alabama in Huntsville, USA. His research interests include quantum computing, federated learning and network security.  He is an Associate Editor of IEEE Transactions on Network Science and Engineering and IEEE Internet of Things Journals.
\end{IEEEbiographynophoto}

\end{document}